\documentclass[11pt]{article}
\pdfoutput=1
\usepackage[T1]{fontenc}

\usepackage{microtype}
\usepackage{graphicx}
\usepackage{subfigure}
\usepackage{booktabs} %
\usepackage{amssymb}
\usepackage{bm}
\usepackage{bbm}
\usepackage{amsmath}  %
\usepackage{amsthm}
\usepackage{xcolor}
\usepackage{arydshln}
\usepackage{textcomp}
\usepackage{multirow}
\usepackage{mathtools}
\usepackage[margin=1in]{geometry}
\usepackage{authblk}
\usepackage[numbers,sort&compress]{natbib}
\usepackage{xspace}
\usepackage{comment}
\usepackage{thmtools, thm-restate}  %
\usepackage{verbatim}
\usepackage{arydshln}
\usepackage[normalem]{ulem}
\usepackage{float}
\usepackage{wrapfig2}

\usepackage{tikz}
\usetikzlibrary{arrows}
\usetikzlibrary{positioning}
\usepackage[table]{xcolor}

\tikzset{
  treenode/.style = {align=center, inner sep=0pt, text centered,
    font=\sffamily},
  arn_n/.style = {treenode, circle, black, font=\sffamily\bfseries, draw=black,
    fill=white, text width=1.5em},%
  arn_r/.style = {treenode, circle, black, font=\sffamily\bfseries, draw=black,
    fill=white, text width=1.0em},%
  arn_x/.style = {treenode, rectangle, draw=black,
    minimum width=0.5em, minimum height=0.5em}%
}

\usepackage{hyperref}
\usepackage[capitalize,noabbrev]{cleveref}

\newcommand{\sysname}{\textsc{BeNice!}\xspace}

\usepackage{bm}
\usepackage{enumitem}

\title{BeNice! Closing Looping's Inference-Memory Gap via KV-Sharing}

\author[1,2,*]{Hayden Prairie}
\author[1,*]{Jacob M. Ryan}
\author[1]{Taylor Berg-Kirkpatrick}
\author[1,2]{Daniel Y. Fu}

\affil[1]{University of California, San Diego}
\affil[2]{Together AI}

\date{}

\usepackage{pifont}
\newcommand{\cmark}{\ding{51}}

\newcommand*{\ShowNotes}{}

\newtheorem{theorem}{Theorem}

\ifdefined\ShowNotes
  \newcommand{\colornote}[3]{{\color{#1}\bf{#2 #3}\normalfont}}
\else
  \newcommand{\colornote}[3]{}
\fi

\definecolor{darkred}{rgb}{0.7,0.1,0.1}
\definecolor{darkgreen}{rgb}{0.1,0.5,0.1}
\definecolor{cyan}{rgb}{0.7,0.0,0.7}
\definecolor{dblue}{rgb}{0.2,0.2,0.8}
\definecolor{maroon}{rgb}{0.76,.13,.28}
\definecolor{burntorange}{rgb}{0.81,.33,0}
\definecolor{royalpurple}{rgb}{0.47,.31,0.66}

\ifdefined\ShowNotes
  
\else
  
\fi

\newcommand{\meanrecurrence}{\mu_{\text{rec}}}
\newcommand{\meanbackward}{\mu_{\text{bwd}}}
\newcommand{\pre}{\mathcal{P}}
\newcommand{\rec}{\mathcal{R}}
\newcommand{\OO}{\mathcal{O}}
\newcommand{\coda}{\mathcal{C}}
\newcommand{\dt}{\Delta}
\newcommand{\A}{\bm{A}}
\newcommand{\B}{\bm{B}}
\newcommand{\C}{\bm{C}}

\newcommand{\dA}{\overline{\bm{A}}}
\newcommand{\dB}{\overline{\bm{B}}}

\newcommand{\climbmix}{\texttt{ClimbMix} \cite{diao2025nemotronclimbclusteringbasediterativedata}\xspace}

\newif\ifarxiv

\begin{document}

\maketitle

\renewcommand{\thefootnote}{\fnsymbol{footnote}}
\footnotetext[1]{Equal contribution. Order decided by \href{https://en.wikipedia.org/wiki/Blokus}{Blokus}.}

\vspace{-4.5em}  %
\begin{center}
\texttt{\{hprairie,jmryan,tberg,danfu\}@ucsd.edu}
\end{center}

\begin{abstract}
  
\noindent Looped architectures scale quality while keeping parameter count fixed by repeatedly passing activations through the same block of layers. 
During inference, each recurrence generates independent \emph{KV-caches}, causing memory to grow linearly.
To address this challenge, prior work has proposed KV-sharing (i.e., sharing KV-caches across depth), 
finding that, in some cases, the KV-cache of the final recurrence can be shared with the earlier recurrences
\emph{without quality loss}, a property we denote as \textbf{\emph{KV-shareability}}. 
However, the mechanisms underlying KV-shareability remain poorly understood, with even identically trained looped architectures not consistently exhibiting it.
In this paper, we study how \emph{KV-shareability} can be enabled in looped models.
We propose a two-stage method: in the first stage, we use \textit{stochastic depth} during pretraining, which induces contractive residual dynamics that drive representations toward a common limit across recurrences. In the second stage, we propose BeNice!, a post-training method that broadcasts the final recurrence's KV-cache into the first recurrence, training it to produce compatible residuals while preserving the original model's quality.
As a result, \sysname enables KV-shareability while matching standard finetuning's unshared validation perplexity. 
Using \sysname, we show that KV-sharing is complementary to alternative KV-compression methods, composing with memory-efficient sequence mixers (e.g., multi-latent attention, Kimi delta attention) and low-precision KV-cache quantization (e.g., \texttt{fp8}, \texttt{mxfp8}, \texttt{nvfp4}).
Finally, in simulated low-memory, long-context inference, KV-sharing improves cache hit rates by up to 31.4\%, increases throughput by up to 23.6\%, and reduces p90 time-to-first-token by up to 83.6\%.

\end{abstract}

\section{Introduction}
\label{sec:introduction}

As inference deployments move to the edge \cite{moon2024lpulatencyoptimizedhighlyscalable,narayan2025minionscostefficientcollaborationondevice} and take up a large portion of compute \cite{touvron2023llamaopenefficientfoundation}, looped architectures create a new, predictable avenue to scale quality while keeping parameter count fixed~\cite{geiping_scaling_2025,prairie2026parcaescalinglawsstable,mcleish_retrofitted_recurrence}. 
Unfortunately, while looped architectures keep the memory overhead of parameters capped, they still create \emph{KV-caches at each recurrence} \cite{prairie2026parcaescalinglawsstable,zhuScalingLatentReasoning2025,gao2026looploopies}, 
causing memory overhead to still \emph{grow linearly} with recurrence count at inference time. 
One potential direction to reduce inference-memory overhead is KV-sharing (i.e., sharing KV-caches across recurrences).
Specifically, some recent looped models have been shown to support reuse of the final recurrence's KV-cache for all prior recurrences with \emph{minimal impact to quality} \cite{geiping_scaling_2025,geiping2025efficientparallelsamplersrecurrentdepth}, a property we term \textbf{\emph{KV-shareability}} (\cref{fig:main} [\emph{right}]).

Unfortunately, the mechanisms underlying KV-shareability remain poorly understood, with more recent work reporting significant quality degradation with KV-cache reuse \cite{zhuScalingLatentReasoning2025,gao2026looploopies}.
Moreover, even models trained under identical setups can differ in whether they exhibit KV-shareability (see \cref{tab:kv-sharing}). 
In this paper, we hypothesize that contractive recurrent dynamics drive residual states toward fixed points, whose agreement under shared and unshared KV-caches enables \emph{KV-shareability}.

To encourage contractive residual dynamics, we train with \emph{stochastic depth},  randomly sampling depth throughout training~(\cref{fig:main} [\emph{left}]). 
By requiring low-loss predictions across multiple depths, stochastic depth discourages residual drift and promotes convergence across recurrences. 
Contractive residual dynamics are what allow KV-shareability, which we formally analyze and empirically validate by showing that replacing intermediate caches with the final cache perturbs the attention output by a term bounded by the difference between their input residuals (see \cref{sec:stochastic-fixed}).

Unfortunately, even though \emph{stochastic depth} creates contractive residual dynamics, it does not guarantee that residual states converge to the same fixed point with and without KV-sharing. Instead, KV-sharing sometimes introduces token-independent residual errors in the first recurrence, shifting the model's fixed point and harming downstream quality (see \cref{sec:compression-failures}).

To enable KV-shareability, we pair stochastic depth pretraining with a targeted correction of the residual errors introduced by KV-sharing. We propose a two-stage method: first, \emph{stochastic depth} pretraining to induce contractive residual dynamics; second, \sysname to correct first recurrence residual errors created from KV-sharing. 
Specifically, \sysname jointly optimizes both the first recurrence with KV-cache reuse and the final recurrence without reuse (\cref{fig:main} [\emph{middle}]), balancing KV-shareability and base-model quality.
To further study KV-sharing with other KV-compression techniques, we extended \sysname to memory-efficient sequence mixers, including multi-latent attention (MLA) \cite{deepseekai2024deepseekv2strongeconomicalefficient} and Kimi delta attention (KDA) \cite{kimiteam2025kimilinearexpressiveefficient}, by reusing the final recurrence's latent or state, respectively.
Overall, \sysname unlocks near-lossless KV-cache reuse across recurrences, facilitating predictable quality scaling using extended looping \cite{prairie2026parcaescalinglawsstable} while keeping parameter count and KV-cache memory constant as depth grows. 

    \begin{figure}[!t]
        \centering
        \vspace{-1em}
       \includegraphics[width=\linewidth]{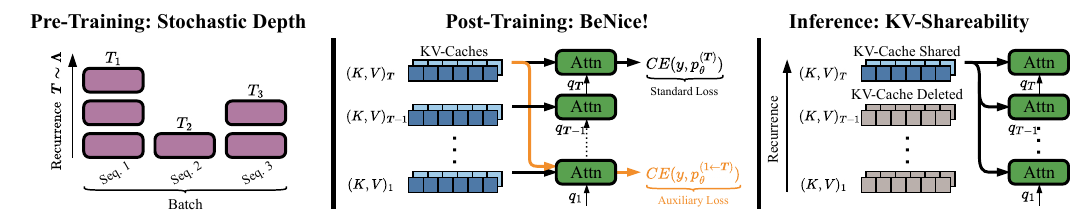}
        \vspace{-2em}
        \caption{\textbf{Stochastic Depth, \sysname, and KV-Shareability.} (\emph{Left}) During pretraining, stochastic depth induces contractive residual dynamics. (\emph{Middle}) \sysname post-trains by injecting the final recurrence KV-caches into the first recurrence. Together, they enable KV-shareability in looped models, retaining only the final recurrence's KV-cache \emph{without degrading quality} (\emph{Right}).}
        \label{fig:main}
        \vspace{-1em}
    \end{figure}

We evaluate KV-shareability and \sysname on end-to-end quality, compatibility with other KV compression methods, and how it accelerates inference workloads:

\begin{itemize}[itemsep=0pt,topsep=0pt,leftmargin=*]
    \item \textbf{End-to-End Quality.} \sysname improves shared quality by up to 8.35 and 4.56 points for Core and Core-extended \cite{li2025datacomplmsearchgenerationtraining}, respectively. Compared with standard finetuning, \sysname improves validation perplexity by 9.75 points when sharing, while matching quality in the unshared case.
    \item \textbf{Complementary to other KV Compression Methods}. To evaluate \emph{KV-sharing} with other KV-compression methods, we study its composition with memory-efficient sequence mixers and KV-quantization. We find that KV-sharing composes with both MLA and KDA sequence mixers and low-precision KV-quantization (e.g., \texttt{fp8}, \texttt{mxfp4}, \texttt{nvfp4}).
    \item \textbf{Accelerating Inference Workloads.} Under simulated long-context inference traffic, we observe that memory reductions in KV-sharing improve cache-hit rates by up to 31.4\% (8.1 points), increasing throughput by up to 23.6\%, and reducing p90 TTFT by up to 83.6\%.
\end{itemize}

\section{Background}
\label{sec:background}

We first provide a brief background on prior looped models (\cref{sec:background:looped}) and the inference overhead of language models (\cref{sec:background:inference-overhead}). 
Looped architectures have been widely explored in synthetic problem solving tasks \cite{bansalEndtoendAlgorithmSynthesis2022,jolicoeur-martineauLessMoreRecursive2025,avi_learn_algorithm, wangHierarchicalReasoningModel2025b,huang2026equilibriumreasonerslearningattractors}; however, our work focuses on looped language models \cite{dehghaniUniversalTransformers2019,zhuScalingLatentReasoning2025,geiping_scaling_2025,prairie2026parcaescalinglawsstable}. These models differ in where recurrence occur (pre-, mid, post-, or full-looping) \cite{alabdulmohsin2025recursiveinferencescalingwinning}, how it is halted (i.e., explicit router \cite{zhuScalingLatentReasoning2025} or implicitly within the residual \cite{geiping_scaling_2025,prairie2026parcaescalinglawsstable}), and how gradients are computed (explicit or implicit backpropagation \cite{bai2019deepequilibriummodels}). We study middle-looped models using implicit halting and explicit differentiation, with an emphasis on how pretraining for fixed-point convergence enables memory-efficient inference.
An extended literature review is found in \cref{sec:extended-lit}.

\subsection{Prior Looped Models}
\label{sec:background:looped}

Most prior looped language models \cite{prairie2026parcaescalinglawsstable,geiping_scaling_2025,zhuScalingLatentReasoning2025} follow a middle-looped structure sequentially composed of an initial \emph{prelude} block $\pre$, a middle \emph{recurrent} block $\rec$, and a final \emph{coda} block $\coda$. For an input sequence $s \in \mathcal{V}^n$, computation proceeds sequentially as $e = \pre(s),~h_{t+1}=\rec(h_t, e),~p=\coda(h_T)$, where a model is looped $T$ times and $h_0$ is initialized randomly (i.e., $h_0\sim\mathcal{N}(0, I\sigma^2)$;~\cite{geiping_scaling_2025,prairie2026parcaescalinglawsstable}) or with $e$ (i.e., $h_0 \gets e$;~\cite{zhuScalingLatentReasoning2025}). Intuitively, $\pre$ maps a sequence $s$ into a latent embedding $e\in\mathbb{R}^{n \times d_e}$, $\rec$ iteratively refines a hidden state $h_t \in \mathbb{R}^{n \times d_h}$ conditioned on $e$, and then $\coda$ maps the final $h_T$ to an output $p \in \mathbb{R}^{n \times |\mathcal{V}|}$. To condition $\rec$ on $e$, input injection at each recurrence is often used (i.e., $h_{t+1} = \rec(\dA \cdot h_t + \dB e)$, where $\dA \in \mathbb{R}^{d_h \times d_h}$ and $\dB \in \mathbb{R}^{d_h \times d_e}$; \cite{prairie2026parcaescalinglawsstable,yang2024loopedtransformersbetterlearning,geiping_scaling_2025}), and following \citet{prairie2026parcaescalinglawsstable} is constrains $\rho(\dA) < 1$ where $\rho(\cdot)$ is the spectral radius, to prevent state explosion.

During training, prior works either sample depth stochastically \citep{geiping_scaling_2025,prairie2026parcaescalinglawsstable} or use adaptive computation time (ACT) \cite{graves2017adaptivecomputationtimerecurrent}. Specifically, Parcae \cite{prairie2026parcaescalinglawsstable} and Huginn \cite{geiping_scaling_2025} sample a depth $T\sim\Lambda$ independently per sequence/batch, where $\Lambda$ is a distribution (e.g., Poisson) with mean $\meanrecurrence$, training $\rec$ to produce meaningful representations across depths. Ouro \cite{zhuScalingLatentReasoning2025} uses ACT: a learned router predicts a halting probability at each recurrence, with computation stopping once the cumulative probability exceeds a threshold \cite{graves2017adaptivecomputationtimerecurrent}. We will show the advantages of training with stochastically sampled depth in this work. Finally, to bound training memory, some models use truncated backpropagation through time (TBPTT) \cite{geiping_scaling_2025,prairie2026parcaescalinglawsstable}, propagating gradients through only the final $\meanbackward < T$ recurrences.

\subsection{Inference Overhead of Looped Language Models}
\label{sec:background:inference-overhead}

We will study how \emph{memory overhead at inference} scales with context length and model depth, particularly within the scope of looped architectures \cite{geiping_scaling_2025,prairie2026parcaescalinglawsstable}. Under uniform numerical precision, inference memory $M$ for a $P$-parameter model, with $L$ context length, $D$ depth, and $d_\text{kv}$ KV width is $M=M_\theta + M_{\text{kv}}$, where model-weight memory is $M_\theta=\OO(P)$ and KV-cache memory is $M_\text{kv} = \OO(LDd_{kv})$.
Beyond lowering numerical precision, prior work reduces $M_\text{kv}$ overhead primarily by shrinking per-layer footprint $Ld_\text{kv}$ (e.g., MLA \cite{deepseekai2024deepseekv2strongeconomicalefficient}, GQA \cite{ainslie2023gqatraininggeneralizedmultiquery}, KDA \cite{kimiteam2025kimilinearexpressiveefficient}), leaving its linear dependence on depth $D$ unchanged.
For a looped model evaluated with $T$ recurrences, the effective depth is $D = D_\pre + T \cdot D_\rec + D_\coda$. Although looping keeps $M_\theta$ fixed as $T$ increases, conventional inference stores a distinct KV cache for each recurrence, causing $M_\text{kv}$ to grow linearly with $T$.

\begin{table}[!t]
    \centering
    \setlength{\tabcolsep}{4pt} %
    \renewcommand{\arraystretch}{1.15} %
    \small
    \begin{tabular}{c|l|ccc|ccc|ccc}
        \toprule
        & \multirow{2}{*}{\textbf{Model}} & \multicolumn{3}{c}{\textbf{Lambada BPB} ($\downarrow$)} & \multicolumn{3}{c}{\textbf{Core} ($\uparrow$)} & \multicolumn{3}{c}{\textbf{Core-Extended} ($\uparrow$)} \\
        \cmidrule(lr){3-5} \cmidrule(lr){6-8} \cmidrule(lr){9-11}
        & & Unshared & Shared & $\mathbf{\Delta}$ & Unshared & Shared & $\mathbf{\Delta}$  & Unshared & Shared & $\mathbf{\Delta}$  \\
        \midrule
        \multirow{5}{*}{\rotatebox[origin=c]{90}{\emph{Fixed-Point}}}
& \cellcolor{gray!10}\textbf{Parcae 140M}
& \cellcolor{gray!10}0.962
& \cellcolor{gray!10}0.962
& \cellcolor{gray!10}\textbf{0.000}
& \cellcolor{gray!10}14.07
& \cellcolor{gray!10}14.17
& \cellcolor{gray!10}\textbf{0.10}
& \cellcolor{gray!10}10.13
& \cellcolor{gray!10}10.09
& \cellcolor{gray!10}\textbf{-0.04} \\

        & \cellcolor{gray!10}\textbf{Parcae 370M}    & \cellcolor{gray!10}0.762 & \cellcolor{gray!10}0.764 & \cellcolor{gray!10}\textbf{0.002} & \cellcolor{gray!10}20.47 & \cellcolor{gray!10}19.85 & \cellcolor{gray!10}\textbf{-0.62} & \cellcolor{gray!10}13.23 & \cellcolor{gray!10}13.24 & \cellcolor{gray!10}\textbf{0.01} \\
        & Parcae 770M    & 0.645 & 1.460 & 0.814 & 25.05 & 21.10 & -3.94 & 15.69 & 13.95 & -1.74 \\
        & Parcae 1.3B    & 0.601 & 2.281 & 1.680 & 28.76 & 21.37 & -7.40 & 16.73 & 13.48 & -3.25 \\
                & \cellcolor{gray!10}\textbf{Huginn 3.5B}    & \cellcolor{gray!10}0.429 & \cellcolor{gray!10}0.429 & \cellcolor{gray!10}\textbf{0.000} & \cellcolor{gray!10}40.97 & \cellcolor{gray!10}40.53 & \cellcolor{gray!10}\textbf{-0.44} & \cellcolor{gray!10}29.06 & \cellcolor{gray!10}28.61 & \cellcolor{gray!10}\textbf{-0.44} \\
                \cdashline{1-11}
        \multirow{2}{*}{\rotatebox[origin=c]{90}{\emph{ACT}}}
        & Ouro 1.4B      & 0.353 & 1.280 & 0.927 & 54.57 & 25.17 & 29.40 & 53.11 & 32.13 & 20.98 \\
        & Ouro 2.6B      & 0.301 & 0.744 & 0.443 & 59.78 & 26.58 & 33.20 & 61.42 & 37.57 & 23.84 \\
        \bottomrule
    \end{tabular}
    \caption{\textbf{Emergence of KV-shareability in Looped Models.} Quality of looped architectures with and without a shared KV-cache. Observe that KV-shareability is \emph{not universal} to all looped architectures (\colorbox{gray!10}{\textbf{Bolded} models have KV-shareability}). See \cref{sec:eval-setup} for evaluation details.}
    \label{tab:kv-sharing}
\end{table}

\section{Understanding KV-shareability in Looped Models}
\label{sec:elucdiation}

In this section, we study \emph{KV-shareability} in looped architectures.
To evaluate KV-shareability, we compare \emph{shared evaluations}, which reuse the final-recurrence cache across all recurrences, against \emph{unshared evaluations}, which retain every recurrence's cache.
Details on reproducing accurate KV-sharing inference setups for our benchmarks are provided in \cref{sec:eval-setup}.

We motivate our investigation by observing that KV-shareability is \emph{not universal to all looped models} (see \cref{tab:kv-sharing}). We establish that \emph{stochastic depth} enables contractive recurrent dynamics, which we formally analyze and empirically validate allows KV-shareability when fixed points agree under shared and unshared KV-caches (\cref{sec:stochastic-fixed}). However, stochastic depth alone is insufficient to guarantee agreement, leading to models not exhibiting KV-shareability (e.g., Parcae 1.3B in \cref{tab:kv-sharing}).
Analyzing failure cases, we find that KV-sharing creates a token-independent basis shift, causing convergence to different fixed points and degraded quality under KV-sharing (\cref{sec:compression-failures}).

\subsection{Stochastic Depth Enables Contractive Residual Dynamics}
\label{sec:stochastic-fixed}

In this section, we characterize how \emph{stochastic-depth} training induces fixed-point convergence, driving KV-caches near a common limit and correlating with the emergence of KV-shareability. 

\paragraph{Setup.} We train Parcae-style \cite{prairie2026parcaescalinglawsstable} models from scratch at four scales (140M, 370M, 770M, and 1.3B), varying only whether recurrence depth is stochastic (i.e., randomly sampled during training) or static (i.e., fixed throughout training).  KV-shareability is evaluated on final models and never trained for explicitly. Training and evaluation details are in \cref{sec:expanded-stochastic-fixed} and \cref{sec:eval-setup}, respectively.

\begin{figure}[!t]
    \centering
    \vspace{-1em}
    \includegraphics[width=\linewidth]{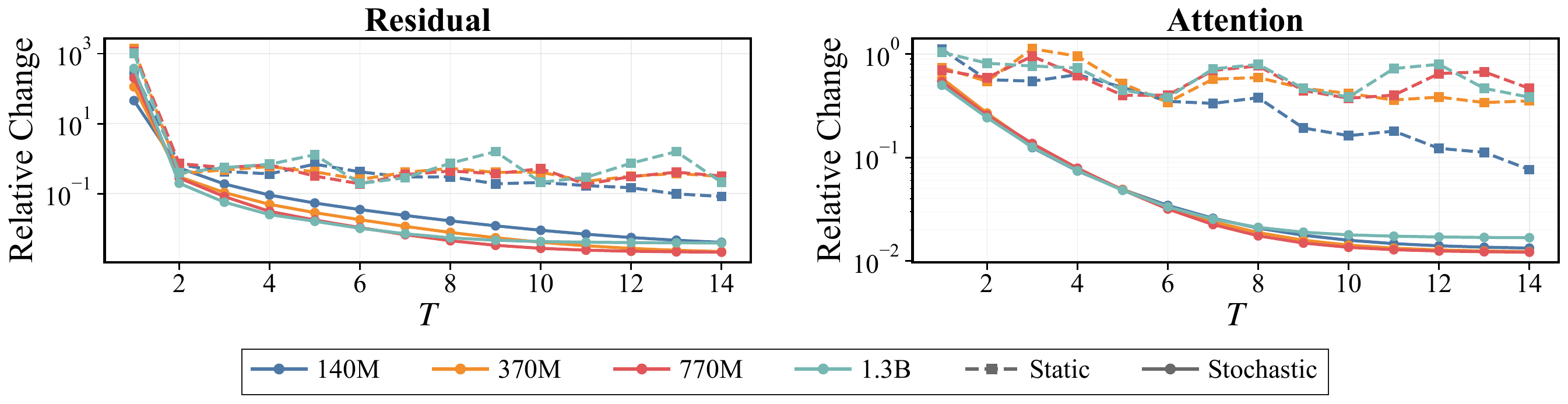}
    \vspace{-1.7em}
    \caption{\textbf{Residuals and Attention Convergence.} Relative step-to-step change of the residual stream ($\lVert h_i - h_{i-1}\rVert / \lVert h_{i-1}\rVert$, \emph{left}) and attention pattern ($\lVert A_i - A_{i-1}\rVert / \lVert A_{i-1}\rVert$, \emph{right}) across recurrence depth $T$. \emph{Stochastic-depth} training exhibits fixed-point convergence, while \emph{static-depth} does not.}
    \label{fig:attn_residual_convergence}
\end{figure}

\begin{table}[!t]
    \centering
    \begin{minipage}[t]{0.5\textwidth}
        \vspace{0pt}
        \centering
        \small
        \begin{tabular}{l|cccc}
            \toprule
            \multirow{2}{*}{\textbf{Size}} & \multicolumn{2}{c}{$\mathbf{\Delta}$ \textbf{Val PPL} ($\downarrow$)} & \multicolumn{2}{c}{$\mathbf{\Delta}$\textbf{Core} ($\downarrow$)} \\
            \cmidrule(lr){2-3} \cmidrule(lr){4-5}
            & \emph{Stochastic} & \emph{Static} & \emph{Stochastic} & \emph{Static} \\
            \midrule
            140M & \textbf{-0.02} & 1.31 & \textbf{0.08} & 0.62 \\
            370M & \textbf{0.02} & 2.70 & \textbf{0.13} & 0.78 \\
            770M & \textbf{0.01} & 3.10 & \textbf{0.13} & 1.06 \\
            1.3B & \textbf{0.01} & 2.84 & \textbf{0.25} &  2.00 \\
            \bottomrule
        \end{tabular}
        \caption{\textbf{KV-Sharing Relies on Stochastic Depth.} We report the quality delta between unshared and shared KV-caching. Extended results and training details are in \cref{sec:expanded-stochastic-fixed}.}
        \label{tab:emergence-coincides-stochastic-depth}
    \end{minipage}
    \hfill
    \begin{minipage}[t]{0.48\textwidth}
        \vspace{-0.3em}
        \centering
        \makeatletter\def\@captype{figure}\makeatother
        \includegraphics[width=\linewidth]{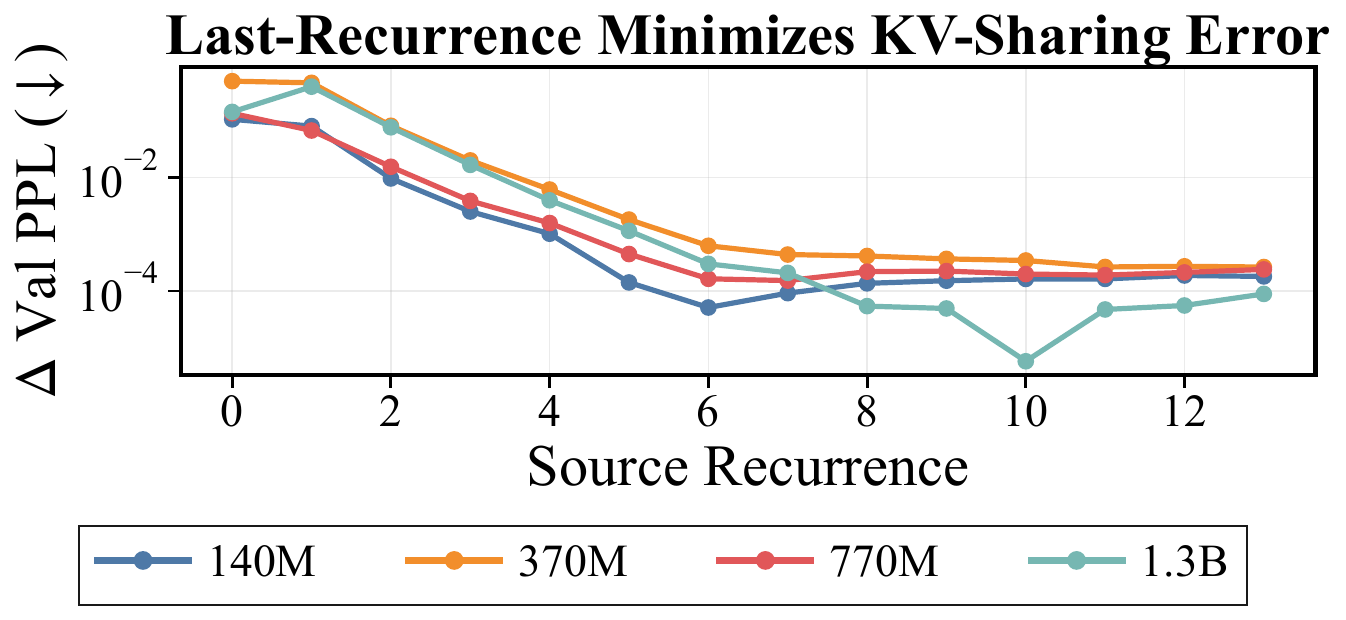}
        \vspace{-1.7em}
        \caption{\textbf{Final-Recurrence KV is Optimal.} We find the best quality results are from converged KV caches. }
        \label{fig:final_recurrence_optimal}
    \end{minipage}
    \vspace{-1em}
\end{table}

\paragraph{Fixed-Point Convergence encourages Shareable KV-Caches.}
Our key insight is that \emph{KV-shareability} often emerges as a downstream consequence of fixed-point convergence in the recurrent block $\rec$.
Specifically, consider a looped architecture in which $\rec$ is a contraction on the hidden state $h_t$, such that $h_t \to \hat h$ where $\hat h$ is a fixed point (e.g., Parcae \cite{prairie2026parcaescalinglawsstable}; Huginn \cite{geiping_scaling_2025}; see \cref{sec:parcae-is-fixed-point} for empirical evidence).
Since the key and value projections are locally-Lipschitz in $h$, their caches also converge: $k_t^{(l)},v_t^{(l)} \to \hat k^{(l)}, \hat v^{(l)}$ for every layer $l$ in $\rec$. Moreover, $\lVert k_t^{(l)} - k_T^{(l)}\rVert$ is bounded by the step-to-step residual $\lVert h_t - h_{t-1}\rVert$ (see \cref{sec:convergence-of-attention}), as illustrated in \cref{fig:attn_residual_convergence}.

KV-cache convergence is necessary for KV-shareability: replacing an intermediate cache $(k_t^{(l)}, v_t^{(l)})$ with a fixed-point cache $(\hat k^{(l)}, \hat v^{(l)})$ perturbs the attention output by a term bounded by the fixed-point residual $\lVert \hat h - h_t\rVert$. To test this, we train Parcae models from scratch using stochastic and static depth training. Stochastic depth training induces a contraction\footnote{Intuitively, stochastic-depth training induces a contraction because a single set of tied weights must produce low-loss latents at many depths under a shared readout, discouraging successive states from drifting.} and coincides with the emergence of KV-shareability (\cref{tab:emergence-coincides-stochastic-depth}). In contrast, static-depth lacks an incentive to keep intermediate latents near a common limit, as reflected in their attention patterns (\cref{fig:attn_residual_convergence}) and shared quality (\cref{tab:emergence-coincides-stochastic-depth}).

Consistent with this mechanism, the final-recurrence cache is empirically the optimal broadcast target (\cref{fig:final_recurrence_optimal}), as $(k_T^{(l)}, v_T^{(l)})$ lie closest to a fixed point across all $t$, minimizing substitution error. 
While KV-shareability is still surprising, it suggests that looped architectures should favor stochastic fixed-point training over ACT objectives. By inducing a contraction towards a common fixed point, stochastic-depth training can enable KV-shareability without explicitly optimizing for it.

\begin{figure}
    \centering
    \includegraphics[width=\linewidth]{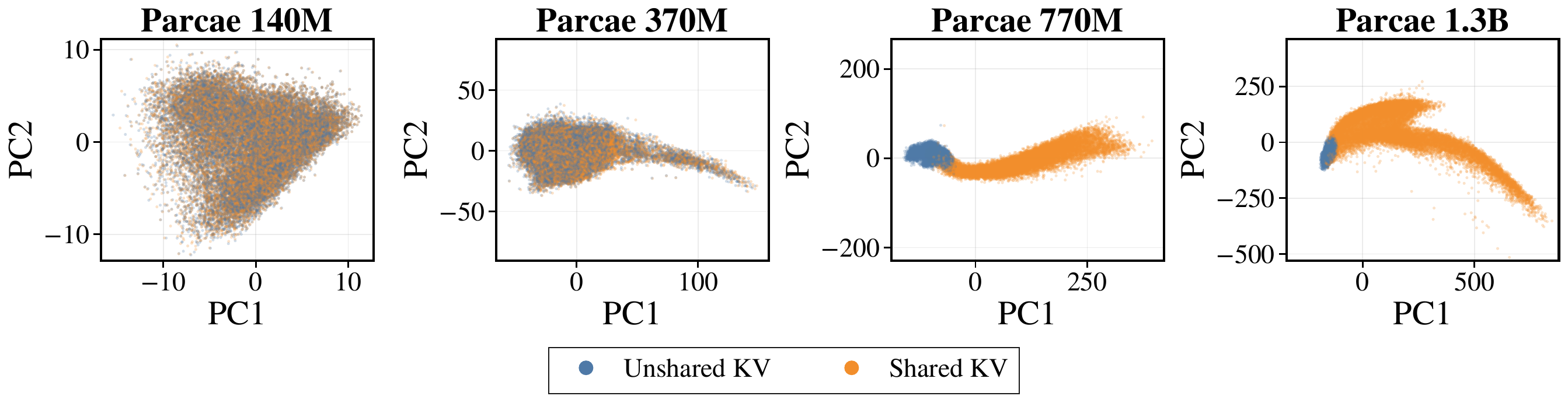}
    \vspace{-1.7em}
    \caption{\textbf{KV-Sharing Failures Converge to Incorrect Fixed Points.} We visualize fixed points of several tokens after $T=8$ test-time recurrence with unshared and shared KV-caches. Models with KV-shareability (e.g., 140m and 370m) converge to the same fixed point under both setups, while models without KV-shareability (e.g., 770M and 1.3B) do not. See \cref{sec:fixed-point-no-good} for extended details. }
    \label{fig:fixed-point-failure}
\end{figure}

\begin{table}[!t]
    \centering
    \begin{minipage}[t]{0.48\textwidth}
        \setlength{\tabcolsep}{6pt}
        \renewcommand{\arraystretch}{1.05}
        \centering
        \begin{tabular}{@{}l|cc@{}}
            \toprule
    \textbf{Cache Compression}
        & \multicolumn{2}{c}{\textbf{Val. PPL.} $(\downarrow)$} \\
    \cmidrule(l){2-3}
        & \textbf{770M}
        & \textbf{1.3B} \\
            \midrule
            \emph{Unshared}  & 11.86 & 10.80 \\
                \cdashline{1-3}
            \emph{Layer $> 1$, Recurrence  $> 1$ } & 11.86 &  10.80 \\
            \emph{Naive Patching} &  11.87 & 10.84 \\ 
                \cdashline{1-3}
            \emph{Shared} & 34.24 & $1.71e4$ \\
            \bottomrule
        \end{tabular}
        \caption{\textbf{Pinpointing Parcae Failures.} Preserving first-layer KVs from the first recurrence or applying naive patching matches the unshared baseline, whereas full KV-sharing fails.
        }
        \label{tab:compression-patching}
    \end{minipage}
    \hfill
    \begin{minipage}[t]{0.48\textwidth}
        \vspace{-5.2em}
        \centering
        \makeatletter\def\@captype{figure}\makeatother
         \includegraphics[width=\linewidth]{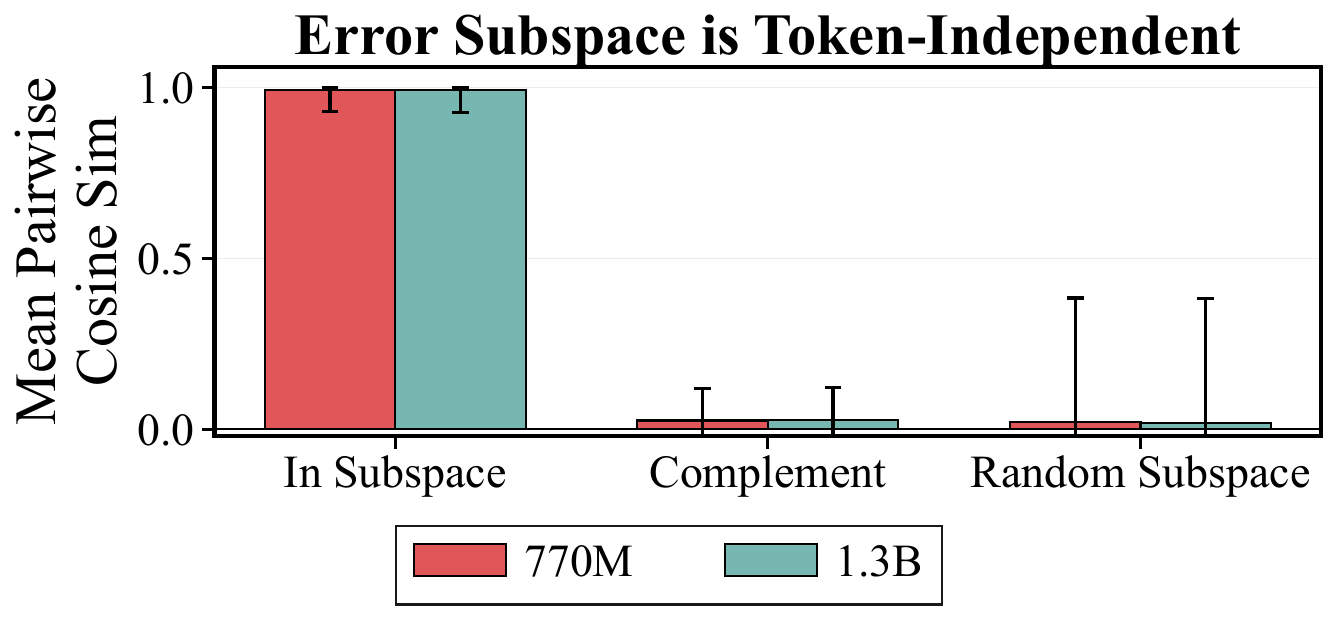}
        \vspace{-1.76em}
        \caption{\textbf{KV-Sharing Error Subspace Similarity.} We find that tokens are strongly aligned within the subspace, but orthogonal in its complement and a random 8-dimensional subspace.}
    \label{fig:mean-pairwise-cosine}
    \end{minipage}
    \vspace{-1em}
\end{table}

\subsection{KV-Sharing Fails to Converge to a Common Fixed Point}
\label{sec:compression-failures}

We now investigate why \emph{contractive residual dynamics} alone are insufficient (e.g., 770M and 1.3B Parcae use stochastic depth and are fixed-point [\cref{sec:parcae-is-fixed-point}], yet lack KV-shareability [\cref{tab:kv-sharing}]).

\paragraph{Setup.}
We focus our analysis on the released Parcae models \cite{prairie2026parcaescalinglawsstable}, focusing on 770M and 1.3B scales that both fail to preserve quality under KV-sharing, yet were trained with stochastic depth and an equivalent setup to 140M and 370M Parcae models. We evaluate them using the setup in \cref{sec:eval-setup}.

\paragraph{KV-Sharing Fails to Converge to Similar Fixed Points.} \cref{fig:fixed-point-failure} visualizes the converged representation with and without KV-sharing.
We quantify the fixed-point agreement by the mean relative difference between final latents, $\|h_T^{(\text{shared})}-h_T^{(\text{unshared})}\|/\|h_T^{(\text{shared})}\|$.
Models with \emph{KV-shareability}  (e.g., Parcae 140M and 370M in \cref{tab:kv-sharing}) converge to similar fixed points, with a mean relative difference of at most 1.69\%. In contrast, models without \emph{KV-shareability} (e.g., Parcae 770M and 1.3B in \cref{tab:kv-sharing}) exhibit mean relative differences reaching up to 177.5\%.

\paragraph{KV-Sharing Failures are Token-Independent Basis Shifts}
Analyzing the source of KV-sharing failure, we observe that caching only the first layer KV of the first recurrence, on top of all final recurrence caches, restores quality (i.e., \emph{Layer $> 1$, Recurrence $>1$} in \cref{tab:compression-patching}). 
This localizes the failure to just the first layer of the first recurrence.
More surprisingly, this failure manifests as a token-independent basis shift, altering the fixed point under KV-sharing and degrading quality. 

\cref{tab:compression-patching} and \cref{fig:mean-pairwise-cosine} confirm this empirically: by constructing a subspace $B$ from the error delta between the first layer output with and without KV-sharing (see \cref{sec:low-dim-construction} for derivation) and projecting each token into $B$, we observe an average projected pairwise cosine similarity of 0.99. 
This confirms that KV-sharing errors act as a token-independent residual basis shift.
To test this, we add the mean projected error to the residual after the first layer during shared evaluations (see \cref{sec:low-dim-construction} for details).
This correction eliminates the need to separately cache the first layer and recurrence when sharing (see \emph{Naive Patching} in \cref{tab:compression-patching}).

\section{Learning to Share Across Recurrences}
\label{sec:methodology}

One way to induce KV-shareability is to train directly on KV-sharing rollouts (i.e., sequentially rolling out KV-sharing dynamics token-by-token). However, this would be computationally infeasible and slow during training. 
Alternatively, using our analysis in \cref{sec:elucdiation}, we propose a two-stage method to enable \emph{KV-shareability} (as visualized in \cref{fig:main}). In the first stage, we follow our observations in \cref{sec:stochastic-fixed} and pretrain models with stochastic depth to induce contractive residual dynamics that drive KV-caches to a common limit across recurrences (\cref{sec:methodology:stochastic}). In the second stage, we correct the token-independent basis shift observed in \cref{sec:compression-failures} with \sysname by optimizing both the first recurrence with KV-cache reuse and the standard final recurrence without reuse (\cref{sec:methodology:posttraining}).

\subsection{Stochastic-Depth Pretraining: Enabling Contractive Recurrent Dynamics}
\label{sec:methodology:stochastic}

In the first stage of our method, we train with \emph{stochastic depth}, driving KV-caches to a common shareable limit across recurrences and setting the groundwork for \emph{KV-shareability} based on \cref{sec:stochastic-fixed}. 
Specifically, following \citet{prairie2026parcaescalinglawsstable}, we utilize per-sequence stochastic depth training, assigning a different depth for each sequence in a batch. This sampling structure was empirically observed to reduce loss spikes in prior work.
We further follow \citet{prairie2026parcaescalinglawsstable}, sampling per-sequence depth from $T\sim \Lambda(\meanrecurrence)$, where $\Lambda$ is a Poisson distribution with mean $\meanrecurrence$.
Lastly, to keep memory bounded during pretraining, we use truncated backpropagation through recurrences, with a backprop depth of $\meanbackward = \lfloor \frac{\meanrecurrence}{2} \rfloor$ \cite{prairie2026parcaescalinglawsstable}.

\subsection{\sysname Post-training: Correcting First Recurrence Residual Errors}
\label{sec:methodology:posttraining}

In the second stage of our method, we correct token-independent issues that still emerge when KV-sharing, even with stochastic-depth training. Using our analysis from \cref{sec:compression-failures}, we know quality degradations are primarily introduced in the first recurrence (see \cref{tab:compression-patching}).
This suggests that KV-sharing compatible inference can be recovered by training the first recurrence pass to be compatible with future KV-caches, rather than directly creating KV-sharing rollouts.
Specifically, consider the ordinary $T$-recurrence predictor $p_\theta^{(T)}$ and a hybrid predictor $p_\theta^{(1\leftarrow T)}$ that uses first-recurrence queries and residuals with final-recurrence KV (see \cref{fig:main} [\emph{middle}]).
We define the \sysname loss as:
\begin{equation}
\mathcal{L}_{\mathrm{\sysname}} = \operatorname{CE}\!\left(y,p_\theta^{(T)}\right) + \gamma\operatorname{CE}\!\left(y,p_\theta^{(1\leftarrow T)}\right)
\label{eq:swap-ce}
\end{equation}
where $\mathrm{CE}(\cdot)$ denotes cross-entropy loss and $\gamma$ controls auxiliary loss strength.
Intuitively, the auxiliary term trains the first recurrence to be compatible with future recurrences' KV-caches, while the standard language-modeling term maintains base quality at later depths.
Finally, during finetuning, we sample $T$ from the same recurrence-depth distribution $\Lambda$ used during pretraining.

Additionally, to study if KV-sharing is complementary to other KV compression methods, we further generalize \sysname to multiple memory-efficient sequence mixers. For example, we can extend \sysname to multi-latent attention \cite{deepseekai2024deepseekv2strongeconomicalefficient} and Kimi delta attention \cite{kimiteam2025kimilinearexpressiveefficient} by swapping the final recurrence's latent or hidden state, respectively, over the KV-cache (see \cref{sec:extended-mla-kda} for details).

\section{Results}
\label{sec:results}

We evaluate end-to-end quality of finetuning methods for enabling KV-shareability (\cref{sec:quality}), its composition with other KV compression techniques (\cref{sec:reduce-costs}), and its latency and throughput gains in memory-bound setups (\cref{sec:inference-workloads}).
Overall, we find \sysname enables KV-shareability, improving shared Core and Core-Extended \cite{li2025datacomplmsearchgenerationtraining} quality by up to 8.35 and 4.56 points, respectively. 
Furthermore, \sysname is complementary to memory-efficient sequence mixers (e.g., MLA \cite{deepseekai2024deepseekv2strongeconomicalefficient}, KDA \cite{kimiteam2025kimilinearexpressiveefficient}) and KV-quantization.
Lastly, in synthetic memory-constrained inference workloads, KV-shareability improves cache-hit rate by up to 31.4\% (8.1 percentage points), increases throughput by up to 23.6\%, and reduces p90 TTFT by up to 83.6\%.

\subsection{Enabling KV-Shareability while Maintaining End-to-End Quality}
\label{sec:quality}

On newly trained Parcae~\cite{prairie2026parcaescalinglawsstable}, \sysname enables KV-shareability and improves base quality. \sysname achieves comparable results to standard finetuning, while enabling KV-shareability.

\begin{table}[!t]
    \centering
    \setlength{\tabcolsep}{6pt} %
    \renewcommand{\arraystretch}{1.05} %
    \small
\begin{tabular*}{\textwidth}{
        @{}cl|@{\hspace{0em}\extracolsep{\fill}}ccc ccc@{}
    }
        \toprule
        & \multirow{2}{*}{\textbf{Model}} & \multicolumn{3}{c}{\textbf{Core} ($\uparrow$)} & \multicolumn{3}{c}{\textbf{Core-Extended} ($\uparrow$)} \\
        \cmidrule(lr){3-5} \cmidrule(lr){6-8}
        & & Unshared & Shared & $\mathbf{\Delta}$ & Unshared & Shared & $\mathbf{\Delta}$  \\
        \midrule
        \multirow{2}{*}{\rotatebox[origin=c]{90}{\small 140M}} 
& Parcae & \textbf{15.50} & 14.97 & -0.53 & \textbf{10.43} & 10.10 & -0.33 \\
& \textbf{+ \sysname} & 15.47 & \textbf{15.61} & \textbf{+0.14} & 10.28 & \textbf{10.19} & \textbf{-0.09} \\
            \midrule
        \multirow{2}{*}{\rotatebox[origin=c]{90}{\small 370M}} 
& Parcae & 20.74 & 15.16 & -5.58 & 12.99 & 9.68 & -3.31 \\
& \textbf{+ \sysname} & \textbf{23.90} & \textbf{23.51} & \textbf{-0.39} & \textbf{14.39} & \textbf{14.24} & \textbf{-0.15} \\
            \midrule
        \multirow{2}{*}{\rotatebox[origin=c]{90}{\small 770M}}
& Parcae & 30.45 & 26.92 & -3.52 & \textbf{19.22} & 16.67 & -2.55 \\
& \textbf{+ \sysname} & \textbf{30.79} & \textbf{30.53} & \textbf{-0.26} & 19.18 & \textbf{19.13} & \textbf{-0.05} \\
            \midrule
        \multirow{2}{*}{\rotatebox[origin=c]{90}{\small 1.3B}} 
& Parcae & 37.27 & 33.94 & -3.33 & 25.41 & 23.46 & -1.95 \\
& \textbf{+ \sysname} & \textbf{37.40} & \textbf{37.43} & \textbf{+0.04} & \textbf{25.69} & \textbf{25.59} & \textbf{-0.10} \\
        \bottomrule
    \end{tabular*}
    \caption{\textbf{\sysname Maintains Quality and Enables KV-Shareability.} We train Parcae models \cite{prairie2026parcaescalinglawsstable} and then post-tune with \sysname. \sysname maintains and improves base model quality, while fully enabling KV-sharing. \textbf{Best} results are \textbf{bolded}.}
    \label{tab:quality}
    \vspace{-.7em}
\end{table}

\paragraph{Setup.}
We follow Parcae \cite{prairie2026parcaescalinglawsstable}, training 140M, 370M, 770M, and 1.3B model classes (see \cref{sec:model-configurations} for definitions) on over 100B \texttt{climbix} \cite{diao2025nemotronclimbclusteringbasediterativedata} tokens.
Baselines~\cite{prairie2026parcaescalinglawsstable} are tuned by sweeping hyperparameters, using our hyperparameter-transfer observations in \cref{sec:hyperparameter-transfer}. 
We finetune \sysname with three seeds, and then evaluate using a test-time recurrence of $T=\meanrecurrence$, following \citet{prairie2026parcaescalinglawsstable}.
Individual hyperparameter sweeps are done for standard finetuning and \sysname (see \cref{sec:hyperparameters}).

\paragraph{\sysname Enables KV-Shareability and Maintains Base Model Quality.} 
\cref{tab:quality} shows that KV-sharing failures emerge at all scales, reducing Core and Core-Ext \cite{li2025datacomplmsearchgenerationtraining} by up to 5.58 and 3.31 points, respectively. Finetuning with \sysname further improves base model quality; it also enables KV-shareability, improving shared Core and Core-Ext by up to 8.35 and 4.56 points, respectively. Overall, \sysname allows looped architectures to scale recurrences to improve quality without scaling memory overhead at test-time.

\paragraph{\sysname Matches Standard Finetuning and Gains KV-Shareability.}
We further compare \sysname to standard finetuning (i.e., without an auxiliary loss term). As seen in \cref{tab:quality-ft}, \sysname achieves comparable quality to standard finetuning when no KV-cache reuse is done (i.e., \emph{unshared}). However, \sysname enables KV-shareability, reducing validation perplexity by up to 45.2\% (9.75 PPL points), when KV-caches are reused across recurrences.

\begin{table}[!t]
    \centering
    \setlength{\tabcolsep}{4pt} %
    \renewcommand{\arraystretch}{1.1} %
    \small
   \begin{tabular}{cl|ccc|ccc|ccc}
        \toprule
        & \multirow{2}{*}{\textbf{Model}} & \multicolumn{3}{c}{\textbf{Val. PPL} ($\downarrow$)} & \multicolumn{3}{c}{\textbf{Core} ($\uparrow$)} & \multicolumn{3}{c}{\textbf{Core-Extended} ($\uparrow$)} \\
        \cmidrule(lr){3-5} \cmidrule(lr){6-8} \cmidrule(lr){9-11}
        & & Unshared & Shared & $\mathbf{\Delta}$ & Unshared & Shared & $\mathbf{\Delta}$  & Unshared & Shared & $\mathbf{\Delta}$  \\
        \midrule
        \multirow{2}{*}{\rotatebox[origin=c]{90}{770M}}
& Standard FT & 11.86 & 21.61 & +9.75 & 24.74 & 23.42 & -1.32 & 15.34 & 14.54 & -0.80 \\
& \textbf{\sysname} & \textbf{11.85} & \textbf{11.86} & \textbf{+0.01} & \textbf{25.15} & \textbf{25.19} & \textbf{+0.04} & \textbf{15.35} & \textbf{15.29} & \textbf{-0.06} \\
            \cdashline{1-11}
        \multirow{2}{*}{\rotatebox[origin=c]{90}{1.3B}}
& Standard FT & \textbf{10.79} & 18.52 & +7.73 & \textbf{29.01} & 27.49 & -1.51 & 17.23 & 16.66 & -0.58 \\
& \textbf{\sysname} & \textbf{10.79} & \textbf{10.78} & \textbf{-0.01} & 28.63 & \textbf{28.67} & \textbf{+0.04} & \textbf{17.32} & \textbf{17.41} & \textbf{+0.09} \\
        \bottomrule
    \end{tabular}
    \caption{\textbf{Comparing \sysname with Standard Finetuning.} We finetune public checkpoints of Parcae \cite{prairie2026parcaescalinglawsstable} on three different seeds, evaluating on \texttt{fineweb-edu} \cite{penedo2024finewebdatasetsdecantingweb} validation and Core and Core Extended \cite{li2025datacomplmsearchgenerationtraining}. \sysname matches the quality of standard finetuning, while enabling KV-shareability. }
    \label{tab:quality-ft}
    \vspace{-0.7em}
\end{table}

\begin{figure}[!t]
    \centering
    \includegraphics[width=\linewidth]{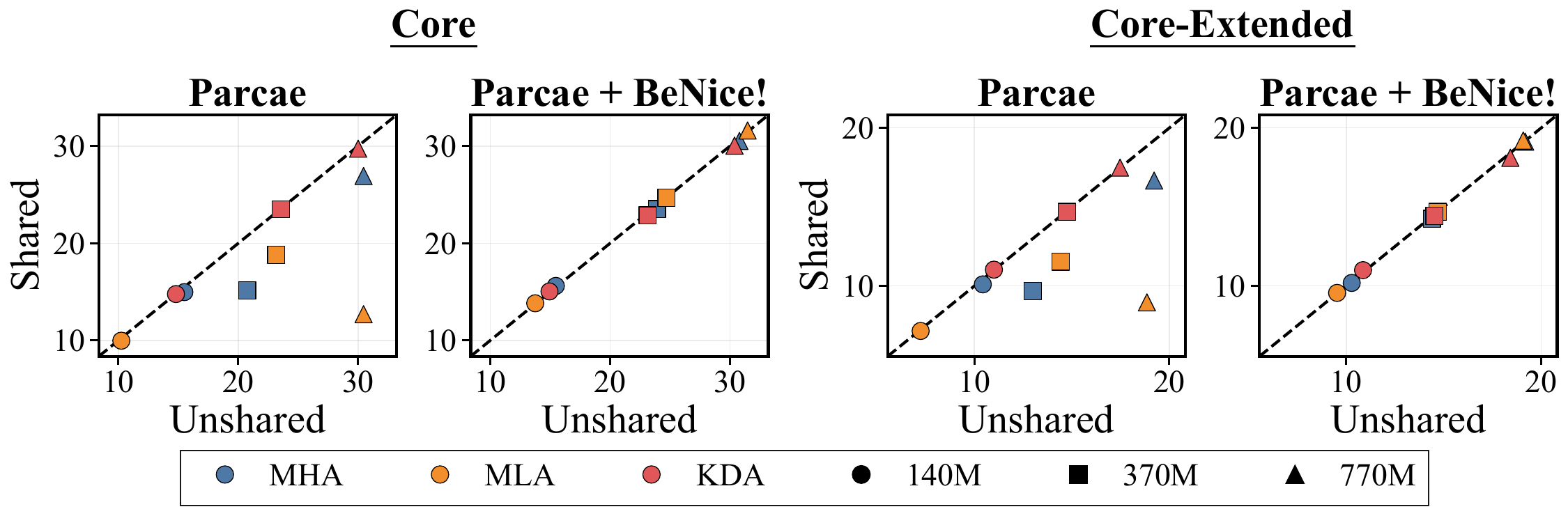}
    \vspace{-1.7em}
    \caption{\textbf{KV-Shareability with Memory-Efficient Sequence Mixers.}
    We train Parcae \cite{prairie2026parcaescalinglawsstable} models with
    MHA \cite{vaswani2023attentionneed},
    MLA \cite{deepseekai2024deepseekv2strongeconomicalefficient}, and
    KDA \cite{kimiteam2025kimilinearexpressiveefficient}. We find that \sysname allows memory-efficient sequence mixers to have KV-shareability, reducing inference overhead.}
    \label{fig:quality-mem}
    \vspace{-0.7em}
\end{figure}

\subsection{KV-Shareability is Complementary to Other KV Compression Methods.}
\label{sec:reduce-costs}

In this section, we highlight how KV-shareability is complementary to alternative KV compression methods, by composing it with memory-efficient sequence mixers (e.g., MLA \cite{deepseekai2024deepseekv2strongeconomicalefficient} and KDA \cite{kimiteam2025kimilinearexpressiveefficient}) and KV-quantization.

\paragraph{Setup.} For experiments on KV-compression, we use Parcae models from \cref{sec:quality}. For experiments composing \emph{KV-sharing} with memory-efficient sequence mixers, we train models with multi-head attention (MHA) \cite{vaswani2023attentionneed}, multi-latent attention (MLA) \cite{deepseekai2024deepseekv2strongeconomicalefficient} and Kimi delta attention (KDA) \cite{kimiteam2025kimilinearexpressiveefficient} (see \cref{sec:extended-mla-kda} for more details). For experiments composing \emph{KV-sharing} with KV-quantization, we directly reduce the precision of keys and values for MHA \cite{vaswani2023attentionneed} or the shared latent for MLA \cite{deepseekai2024deepseekv2strongeconomicalefficient}.

\paragraph{KV-Sharing Composes with Memory-Efficient Sequence Mixers.} In \cref{fig:quality-mem}, base MLA and KDA\footnote{Interestingly, KDA models appear to not require additional finetuning with \sysname.} variants of Parcae fail to have KV-shareability, suffering quality drops when sharing final recurrence KV across earlier recurrences. These KV-sharing failures mirror the same pattern as the released MHA Parcae models explored in \cref{sec:elucdiation} (see \cref{sec:extended-mla-kda} for more details).
\sysname achieves KV-shareability with all sequence mixers. Across all model scales and sequence mixers, KV-sharing decreases Core and Core-Extended \cite{li2025datacomplmsearchgenerationtraining} by 0.09 and 0.04 points, respectively, on average.

\begin{figure}[!t]
    \centering
    \includegraphics[width=\linewidth]{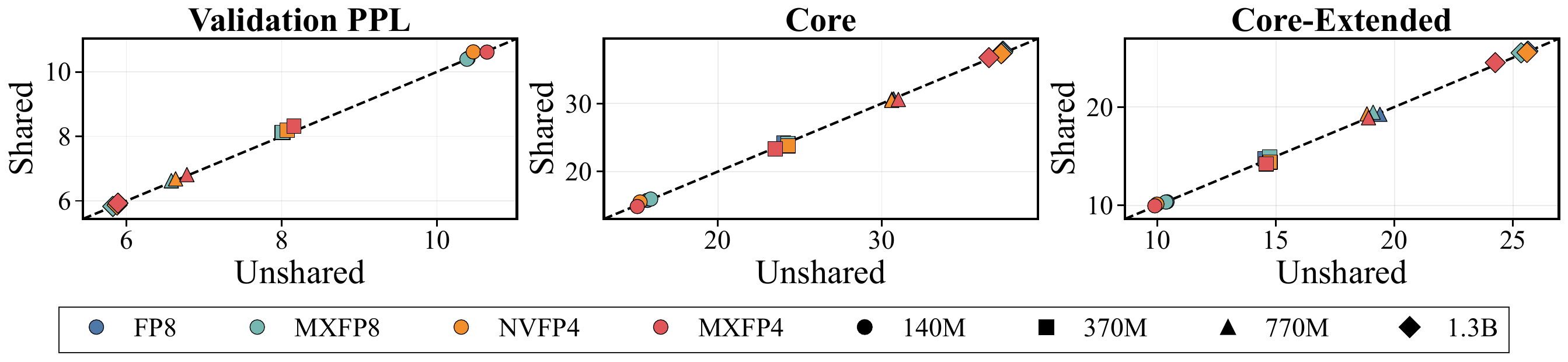}
    \vspace{-1.8em}
    \caption{\textbf{Quality of KV-Sharing and -Quantization.} We report changes in \texttt{climbmix} PPL. \cite{diao2025nemotronclimbclusteringbasediterativedata}, Core, and Core-Extended \cite{li2025datacomplmsearchgenerationtraining} after quantizing the KV-shared cache across MHA \cite{vaswani2023attentionneed} model scales. KV-sharing maintains quality across cache precisions.}
    \vspace{-1em}
    \label{fig:quantization-quality}
\end{figure}

\paragraph{KV-Sharing is Compatible with KV-Quantization.}
In \cref{fig:quantization-quality}, across scales and precisions, KV-sharing changes Core and Core-Ext by only $-0.02$ and $+0.06$ points on average, respectively, while increasing ClimbMix validation perplexity~\cite{diao2025nemotronclimbclusteringbasediterativedata} by just $0.67\%$ on average.
KV-sharing and quantization interact by at most 0.53\% in \texttt{climbmix} perplexity \cite{diao2025nemotronclimbclusteringbasediterativedata}, indicating that KV-sharing is an orthogonal compression axis.
We expect stronger KV-quantization methods (e.g., KVQuant \cite{hooper2025kvquant10millioncontext}, KIVI \cite{kivi}, ScaleSearch \cite{gupta2026searchblockfloatingpoint}) will compose with KV-sharing, further improving quality.

\subsection{Saving Memory and Accelerating Inference Workloads with KV-Shareability}
\label{sec:inference-workloads}

In this section, we simulate how memory savings from KV-shareability improve TTFT and throughput under long-context inference workloads.

\paragraph{Setup.} For long-context memory experiments, we implement efficient KV-sharing for Parcae \cite{prairie2026parcaescalinglawsstable} in \texttt{sglang} \cite{zheng2024sglangefficientexecutionstructured}. 
We report total GPU memory usage, including model weights, KV-cache storage, CUDA graphs, and runtime buffers. 
For workload experiments, we replay production-derived Mooncake requests \citep{qin2025mooncakekvcachecentricdisaggregatedarchitecture} using \texttt{AIPerf} \cite{ren2021aiperfautomatedmachinelearning}, preserving arrival times, input/output lengths, and prefix-sharing relations.
To study realistic model scales, we construct proxy models from standard open-weights (e.g., \texttt{GLM 4.7-FP8} \cite{5team2025glm45agenticreasoningcoding}) by repeatedly looping the middle third transformer blocks, with and without KV-sharing. 
We vary both the offered request rate and maximum concurrency to characterize performance across load- and concurrency-limited regimes, respectively.

\paragraph{KV-Sharing Reduces Memory Overhead.} 
In \cref{fig:memory-reduction}, we observe that memory savings scale with recurrence count ($\mu_{\text{rec}}$) and KV-cache size. For our 1.3B Parcae on Core-Extended \cite{li2025datacomplmsearchgenerationtraining} with batch size and context length of 64 and 2048, respectively, KV-shareability reduces measured global HBM usage by 66.2\% from 63.82 GB to 21.60 GB. At $\mu_{\text{rec}}=8$, KV-sharing increases the total token capacity to $3.33\times$ the unshared capacity under the same KV-memory budget.

\begin{figure}[!t]
    \vspace{-1em}
    \begin{minipage}{0.5\linewidth}
        \centering
        \includegraphics[width=\linewidth]{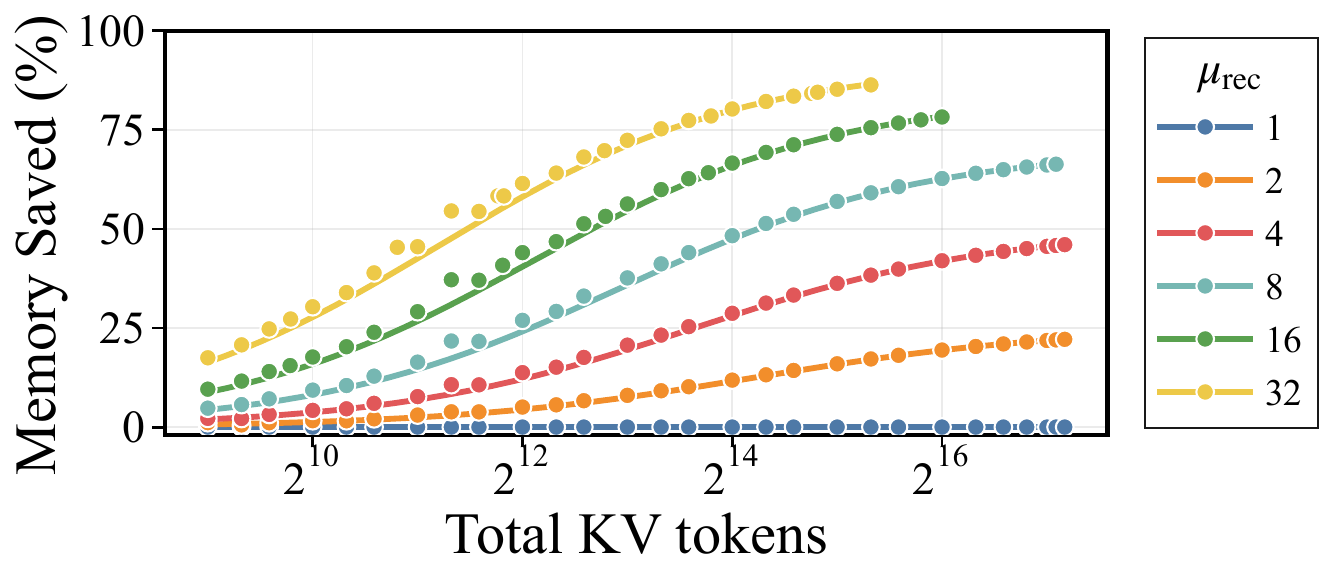}
        \vspace{-2em}
        \caption{\textbf{Global Memory Reduction of KV-Sharing.} We measure and fit the memory overhead of KV-sharing, varying test-time recurrences.}
        \label{fig:memory-reduction}
    \end{minipage}
    \hfill
    \begin{minipage}{0.49\linewidth}
        \centering
        \vspace{0.3em}
        \includegraphics[width=\linewidth]{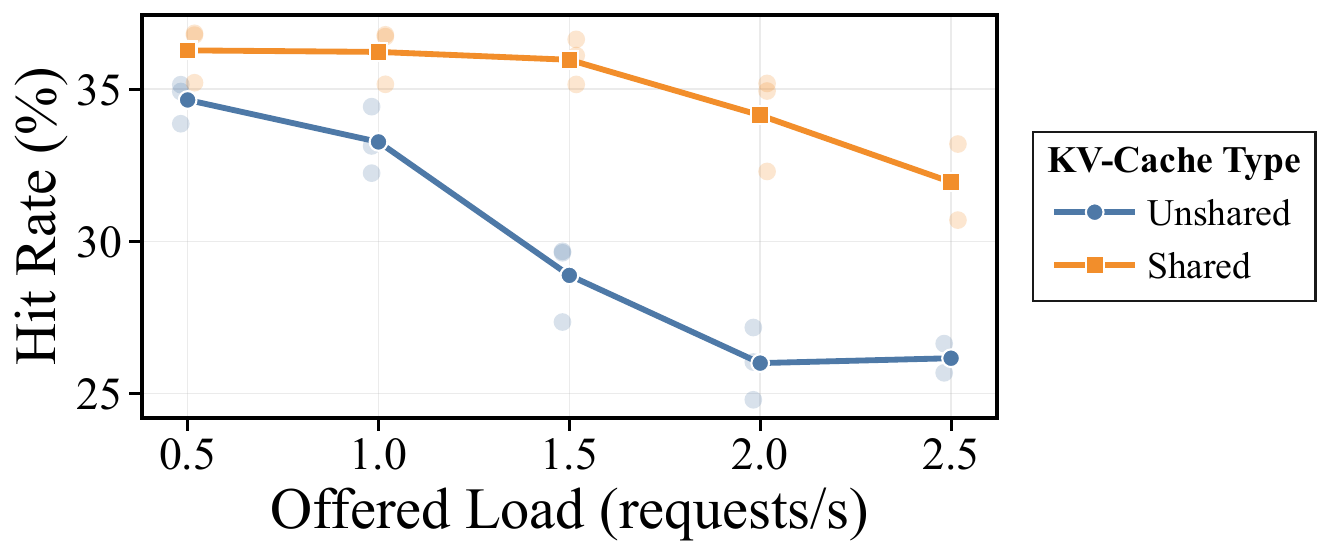}
        \vspace{-1.8em}
        \caption{\textbf{Cache-Hit Rate Under Sharing.} We average the cache-hit rate with and without KV-sharing using Mooncake traces \cite{qin2025mooncakekvcachecentricdisaggregatedarchitecture}.}
        \label{fig:cache-hit-rate}
    \end{minipage}
    \vspace{-0.7em}
\end{figure}

\begin{figure}[!t]
    \centering
    \includegraphics[width=\linewidth]{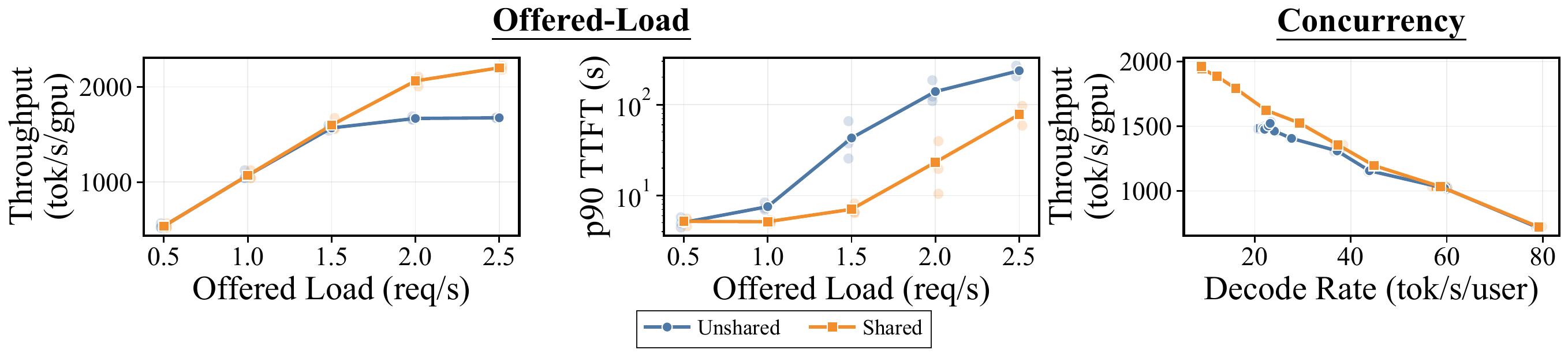}
    \vspace{-1.5em}
    \caption{\textbf{Throughput Frontier of KV-Sharing.} (\emph{Left}) Under increasing offered load, \emph{KV-Sharing} improves throughput and reduces TTFT over unshared looping. (\emph{Right}) Under high concurrency load, KV-sharing improves overall throughput at matched decode rates.}
    \vspace{-0.7em}
    \label{fig:throughput-frontier}
\end{figure}

\paragraph{KV-Sharing Accelerates Inference Workloads.}
With reduced memory overhead, KV-shareability increases the total tokens retained under a fixed cache budget (e.g., KV-sharing increases total token budgets by $3.33\times$ and $6\times$ for $\mu_{\text{rec}}=8$ and $\mu_{\text{rec}}=16$, respectively). 
Larger token budgets reduce prefix evictions and prefix recomputation (i.e., directly increasing cache hit rate by 8.1 points at $2$ req/s in \cref{fig:cache-hit-rate}).
Higher cache-hit rates further improve throughput: at low loads (e.g., 0.5 req/s), KV-sharing matches the unshared baseline in TTFT and throughput; however, at high loads (e.g., 2 req/s), KV-shareability improves throughput by 23.6\% and reduces p90 TTFT from 139.2\,s to 23.2\,s (\cref{fig:throughput-frontier} [\emph{left}]).
Along the concurrency frontier, KV-shareability provides up to 6.7\% higher throughput at a fixed median decode rate and 34.8\% higher median decode rate at a fixed throughput (\cref{fig:throughput-frontier} [\emph{right}]).
Overall, KV-shareability improves throughput and latency when workloads approach saturation.

\section{Conclusion}
\label{sec:conclusion}

In this work, we study the emergence and failure points of \emph{KV-shareability}, finding that contractive residual dynamics drive KV-caches towards a common shareable limit. To enable \emph{KV-shareability}, we propose a two stage method: first, we use stochastic depth to induce contractive residual dynamics across recurrences; second, we provide \sysname, a post-training method that corrects residual errors in the first recurrence to ensure that models approach a common limit with and without KV-sharing. KV-shareability composes with both memory-efficient sequence mixers and KV-quantization, detaches inference memory overhead from test-time recurrences, and improves throughput and TTFT in memory-constrained settings.

\paragraph{Acknowledgments.} We graciously acknowledge the support of Together AI and AMD through the FAR research grant. This research was supported in part by the NSF under grants 2146151 and 2422214.

\bibliographystyle{plainnat}
\bibliography{main}

\appendix

\newpage
\section{Glossary}
\label{sec:glosssary}
We include a brief glossary of notations present in this work in \cref{tab:glossary}.

\begin{table}[h]
    \centering
    \begin{tabular}{lll}
    \toprule
         Notation &  Description \\
    \midrule
         $d_h$ & Hidden \textbf{dimension} of the model & \\
         $t$ & Discrete temporal \textbf{state} axis of $\mathcal{R}$ on $\mathbb{N}$& \\
         $\gamma$ & Hyperparameter used to select the strength of the auxiliary loss in \sysname& \\
    \midrule
         $\mathcal{P}$ & Initial \textbf{prelude} block of a recurrent architecture & \\
         $\mathcal{R}$ & Middle \textbf{recurrent} block of a recurrent architecture & \\
         $\mathcal{C}$ & Final \textbf{coda} block of a recurrent architecture & \\
         $s$ & Input sequence to a model & \\
         $e$ & Output embedding of the prelude block $\mathcal{P}$ \\
         $h$ & Hidden embedding of the recurrent block $\mathcal{R}$\\
         $\meanrecurrence$ & Mean recurrent forward propagation steps during pre-training \\
         $\meanbackward$ & Mean recurrent backward propagation steps during pre-training \\
         $T$ & Final number of recurrent steps actually taken \\
         $\Lambda$ & Distribution that recurrences are sampled from during training \\
    \midrule
        \emph{KV-sharing} & Using final recurrence’s KV-cache in early recurrences without quality loss&\\
        \emph{Unshared} & Decoding performed with a KV cache per recurrence & \\
        \emph{Shared} & Decoding performed using only the final recurrence KV cache (KV-sharing)& \\
        $p_\theta^{(1 \leftarrow T)}$ & Output with first recurrence attending to final recurrence KV-cache & \\
        $p_\theta^{(T)}$ & Output with first recurrence attending to first recurrence KV-cache (\emph{Unshared}) & \\
        
    \bottomrule
    \end{tabular}
    \caption{Glossary of notation and terminology. (\textit{Top}) Frequently used dimensions for tensors. (\textit{Middle}) Definition of Parcae blocks, tensors, and distributions used to express recurrent depth models. (\textit{Bottom}) KV-cache compression definitions.} 
    \label{tab:glossary}
\end{table}

\section{Extended Literature Review}
\label{sec:extended-lit}

Looped architectures have been and continue to be explored by a growing body of work. We focus on architectural differences. Here, we discuss the main categories of architectural differences, as well as other works on memory efficiency in looped models.

\paragraph{Middle-Looped Models.} This framework of models utilizes three blocks: a prelude block ($\mathcal{P}$) that performs representation extraction using transformer layers, a recurrent block ($\mathcal{R}$) that recurrently updates activations, and a coda block ($\mathcal{C}$) that turns activations into outputs. \citet{bansalEndtoendAlgorithmSynthesis2022}, \citet{geiping_scaling_2025}, \citet{mcleish_retrofitted_recurrence}, and \citet{prairie2026parcaescalinglawsstable} all train middle-looped models with stochastically sampled recurrence depth during pretraining. \citet{Schwethelm2026-hv} trained middle-looped models with fixed recurrence depth. These architectures tend to be trained with implicit halting mechanisms to determine when to stop looping \cite{bansalEndtoendAlgorithmSynthesis2022, geiping_scaling_2025, mcleish_retrofitted_recurrence, prairie2026parcaescalinglawsstable, Schwethelm2026-hv}, using either convergence methods like KL divergence between loops \cite{bansalEndtoendAlgorithmSynthesis2022, mcleish_retrofitted_recurrence}, low relative differences in successive hidden states \cite{geiping_scaling_2025}, or use the mean training recurrence as the optimal inference time recurrence depth \cite{prairie2026parcaescalinglawsstable, Schwethelm2026-hv}. However, \citet{baeMixtureofRecursionsLearningDynamic2025} is trained with a middle-looped architecture, but uses an explicit halting mechanism of lightweight routers to decide recurrence depth.

\paragraph{Fully Looped Models.} This regime of looped models loops the entire model every recurrence. While not strictly a requirement, this regime of architecture tends to pull from \citet{graves2017adaptivecomputationtimerecurrent} work on adaptive compute time (ACT), using explicit halting mechanism to determine when to stop looping. \citet{dehghaniUniversalTransformers2019}, \citet{zhuScalingLatentReasoning2025} (Ouro), \citet{elbayadDepthAdaptiveTransformer2020}, \citet{jolicoeur-martineauLessMoreRecursive2025},  and \citet{Fu2026-rz} all extend this concept to language models, varying the mechanisms used for explicit halting. These halting mechanisms include prediction heads for dynamically deciding halting probabilities \cite{zhuScalingLatentReasoning2025, elbayadDepthAdaptiveTransformer2020} and Q-learning as a means of determining convergence \cite{jolicoeur-martineauLessMoreRecursive2025, dehghaniUniversalTransformers2019, graves2017adaptivecomputationtimerecurrent}. Most of these models are trained with fixed recurrence depth \cite{dehghaniUniversalTransformers2019, zhuScalingLatentReasoning2025, elbayadDepthAdaptiveTransformer2020, Fu2026-rz}.

\paragraph{KV-sharing and Memory Efficiency in Looped Models.} Huginn \cite{geiping_scaling_2025, geiping2025efficientparallelsamplersrecurrentdepth} was noted as having KV-sharing ability out-of-the-box. \citet{baeMixtureofRecursionsLearningDynamic2025} proposed a KV-sharing variant for their model that shares first recurrence KV-cache, with some quality loss. \citet{Deng2026-pw} extends the Ouro\cite{zhuScalingLatentReasoning2025} style looped model to use with memory-efficient sequence mixers. \citet{Vendrell2026-og} proposed a KV-sharing variant of the Ouro style model that shares the first recurrence KV-cache with later recurrences in addition to maintaining a sliding window KV-cache per recurrence.

\section{Evaluation Setup of KV-Shareability}
\label{sec:eval-setup}

In this section, we will discuss how we augment benchmarks to evaluate whether a looped architecture can losslessly utilize \emph{KV-sharing}. We will break benchmarks into two types (i.e., perplexity- and question-answering-based evaluations) and discuss how each is evaluated below.

\paragraph{Perplexity-Based Evalutations.} For a standard perplexity benchmark $S$ evaluated on a looped architecture with $T$ test-time recurrences, the average per-token perplexity is

\begin{equation}
    \mathrm{PPL}(S, T) \;=\;
        \exp\!\left( -\frac{1}{\sum_{s \in S} |s|} \sum_{s \in S} \sum_{i=1}^{|s|}
            \log p_\theta^{(T)}\!\left(s_i \mid s_{<i}\right) \right),
\end{equation}

where $p_\theta^{(T)}$ denotes the model's next-token distribution when run for $T$ recurrences. Because the full prefix $s_{<i}$ can be teacher-forced in a single parallel forward pass, the per-sequence cost is
$\mathcal{O}(T\,|s|)$.
To assess whether a looped architecture has \emph{KV-shareability}, we
augment this benchmark in two ways. First, we replace the per-recurrence
KV cache with the projection
\begin{equation*}
    \Pi_{\mathrm{shared}}\!\left(\bigl\{K_\ell^{(t)},\, V_\ell^{(t)}\bigr\}_{t=1}^{T}\right)
    \;:=\;
    \bigl\{K_\ell^{(T)},\, V_\ell^{(T)}\bigr\}_{t=1}^{T},
    \qquad \forall\, \ell \in \mathcal{R},
\end{equation*}
which broadcasts the \emph{final} recurrence's keys and values across all $T$ recurrences in the looped block. To recover the standard perplexity evaluation, we can use the operator $\Pi_\text{unshared}$, where 
\begin{equation*}
    \Pi_{\text{unshared}}\!\left(\bigl\{K_\ell^{(t)},\, V_\ell^{(t)}\bigr\}_{t=1}^{T}\right)
    \;:=\;
    \bigl\{K_\ell^{(t)},\, V_\ell^{(t)}\bigr\}_{t=1}^{T},
    \qquad \forall\, \ell \in \mathcal{R}.
\end{equation*}

Finally, as populating the shared cache $\Pi_T$ for the prediction at position $i$ requires re-running the entire prefix token-by-token (each token must complete all $T$ recurrences so its final-recurrence $K^{(T)}_\ell, V^{(T)}_\ell$ is written into the shared slot before the next token can read it back), the evaluation no longer admits a single parallel forward pass per sequence, giving
\begin{equation}
    \mathrm{PPL}(S, T;\, \Pi_\mathrm{shared}) \;=\;
        \exp\!\left( -\frac{1}{\sum_{s \in S} |s|} \sum_{s \in S} \sum_{i=1}^{|s|}
            \log p_\theta^{(T)}\!\left(s_i \,\Big|\, \Pi_\mathrm{shared}\!\left(s_{<i}\right)
            \right) \right),
\end{equation}

where $\Pi_\mathrm{shared}(s_{<i})$ denotes the prefix $s_{<i}$ processed under the KV-cache reuse projection $\Pi_\mathrm{shared}$ (i.e., with each token's per-recurrence keys and values $\{K^{(t)}_\ell, V^{(t)}_\ell\}_{t=1}^{T}$ collapsed to the final-recurrence cache $\{K^{(T)}_\ell, V^{(T)}_\ell\}_{t=1}^{T}$ before the next token attends back to it). The per-sequence cost is $\mathcal{O}(T\,|s|^2)$ as one factor of $|s|$ comes from the outer position sum, and another from the prefix that must be re-processed for each prediction.

\paragraph{Question-Answering Evaluations.}
For a question prompt $q$ with candidate answers
$\mathcal{A}=\{a^{(1)},\ldots,a^{(m)}\}$, we rank candidates by their
length-normalized conditional negative log-likelihood:
\begin{equation}
    \mathcal{L}_{\Pi}\!\left(a^{(j)} \mid q\right)
    =
    -\frac{1}{|a^{(j)}|}
    \sum_{k=1}^{|a^{(j)}|}
    \log p_\theta^{(T)}\!\left(
        a_k^{(j)}
        \,\middle|\,
        \Pi\!\left(q \circ a_{<k}^{(j)}\right)
    \right),
\end{equation}
and predict
\begin{equation}
    \hat{y}
    =
    \arg\min_{j \in \{1,\ldots,m\}}
    \mathcal{L}_{\Pi}\!\left(a^{(j)} \mid q\right).
\end{equation}
For KV-cache reuse, we prefill the common prompt $q$ once under
$\Pi_{\mathrm{shared}}$, replicate the resulting cache across candidates,
and decode each candidate token-by-token so that every token writes its
final-recurrence keys and values before the next token is scored.
For the unshared baseline, all candidate sequences are evaluated with a
standard parallel teacher-forced forward pass.

\paragraph{Categorization of Benchmarks.}
Table~\ref{tab:eval-tasks} summarizes the 46 downstream tasks used in our evaluation, grouped by domain, scoring protocol, and number of in-context examples. CORE comprises the marked 22-task subset, while CORE-Extended includes all 46 tasks.

\clearpage

\begin{table*}[t]
\centering
\setlength{\tabcolsep}{4pt}
\renewcommand{\arraystretch}{0.92}
\scriptsize
\begin{tabular}{@{}llcccc@{}}
\toprule
\textbf{Category} & \textbf{Task} & \textbf{Type} & \textbf{Shots}
& \textbf{KV Protocol} & \textbf{CORE} \\
\midrule
\multirow{8}{*}{\rotatebox[origin=c]{90}{\textit{Understanding}}}
  & HellaSwag (0-shot) \cite{zellers2019hellaswag} & MC & 0 & QA & \cmark \\
  & HellaSwag (10-shot) \cite{zellers2019hellaswag} & MC & 10 & QA & \cmark \\
  & LAMBADA \cite{paperno2016lambada} & LM & 0 & QA/PPL & \cmark \\
  & Winograd WSC \cite{wsc:2015} & S & 0 & QA & \cmark \\
  & WinoGrande \cite{sakaguchi2021winogrande} & S & 0 & QA & \cmark \\
  & BIG-Bench Language ID \cite{srivastava2023imitationgamequantifyingextrapolating} & MC & 10 & QA & \cmark \\
  & BIG-Bench Conlang Translation \cite{srivastava2023imitationgamequantifyingextrapolating} & LM & 0 & QA & \\
  & BIG-Bench Conceptual Combinations \cite{srivastava2023imitationgamequantifyingextrapolating} & MC & 10 & QA & \\
\midrule
\multirow{7}{*}{\rotatebox[origin=c]{90}{\textit{World Knowledge}}}
  & Jeopardy \cite{kaggle200000Jeopardy} & LM & 10 & QA & \cmark \\
  & BIG-Bench QA WikiData \cite{srivastava2023imitationgamequantifyingextrapolating} & LM & 10 & QA & \cmark \\
  & ARC-Easy \cite{clark2018think} & MC & 10 & QA & \cmark \\
  & ARC-Challenge \cite{clark2018think} & MC & 10 & QA & \cmark \\
  & MMLU (0-shot) \cite{hendrycks2021measuringmassivemultitasklanguage} & MC & 0 & QA & \\
  & MMLU (5-shot) \cite{hendrycks2021measuringmassivemultitasklanguage} & MC & 5 & QA & \\
  & BIG-Bench Misconceptions \cite{srivastava2023imitationgamequantifyingextrapolating} & MC & 10 & QA & \\
\midrule
\multirow{8}{*}{\rotatebox[origin=c]{90}{\textit{Commonsense}}}
  & COPA \cite{gordon-etal-2012-semeval} & MC & 0 & QA & \cmark \\
  & CommonsenseQA \cite{talmor2019commonsenseqaquestionansweringchallenge} & MC & 10 & QA & \cmark \\
  & PIQA \cite{bisk2020piqa} & MC & 10 & QA & \cmark \\
  & OpenBookQA \cite{mihaylov2018suitarmorconductelectricity} & MC & 0 & QA & \cmark \\
  & SIQA \cite{sap2019socialiqacommonsensereasoningsocial} & MC & 10 & QA & \\
  & BIG-Bench Novel Concepts \cite{srivastava2023imitationgamequantifyingextrapolating} & MC & 10 & QA & \\
  & BIG-Bench Strange Stories \cite{srivastava2023imitationgamequantifyingextrapolating} & MC & 10 & QA & \\
  & BIG-Bench StrategyQA \cite{srivastava2023imitationgamequantifyingextrapolating} & MC & 10 & QA & \\
\midrule
\multirow{11}{*}{\rotatebox[origin=c]{90}{\textit{Symbolic/Math}}}
  & BIG-Bench Dyck Languages \cite{srivastava2023imitationgamequantifyingextrapolating} & LM & 10 & QA & \cmark \\
  & AGI Eval LSAT AR \cite{zhong2021arlsat} & MC & 3 & QA & \cmark \\
  & BIG-Bench CS Algorithms \cite{srivastava2023imitationgamequantifyingextrapolating} & LM & 10 & QA & \cmark \\
  & BIG-Bench Operators \cite{srivastava2023imitationgamequantifyingextrapolating} & LM & 10 & QA & \cmark \\
  & BIG-Bench Repeat Copy Logic \cite{srivastava2023imitationgamequantifyingextrapolating} & LM & 10 & QA & \cmark \\
  & BIG-Bench Elementary Math QA \cite{srivastava2023imitationgamequantifyingextrapolating} & MC & 10 & QA & \\
  & BIG-Bench Logical Deduction \cite{srivastava2023imitationgamequantifyingextrapolating} & MC & 10 & QA & \\
  & Simple Arithmetic (no spaces) \cite{llmfoundry} & LM & 10 & QA & \\
  & Simple Arithmetic (with spaces) \cite{llmfoundry} & LM & 10 & QA & \\
  & MathQA \cite{amini2019mathqa} & MC & 10 & QA & \\
  & LogiQA \cite{liu2020logiqa} & MC & 10 & QA & \\
\midrule
\multirow{8}{*}{\rotatebox[origin=c]{90}{\textit{Reading Comp.}}}
  & SQuAD \cite{rajpurkar2016squad100000questionsmachine} & LM & 10 & QA & \cmark \\
  & CoQA \cite{reddy2019coqaconversationalquestionanswering} & LM & 0 & QA & \cmark \\
  & BoolQ \cite{clark2019boolq} & MC & 10 & QA & \cmark \\
  & PubMedQA (labeled) \cite{jin2019pubmedqa} & LM & 10 & QA & \\
  & AGI Eval LSAT RC \cite{zhong2023agieval} & MC & 3 & QA & \\
  & AGI Eval LSAT LR \cite{wang2022lsat} & MC & 3 & QA & \\
  & AGI Eval SAT English \cite{zhong2023agieval} & MC & 3 & QA & \\
  & BIG-Bench Understanding Fables \cite{srivastava2023imitationgamequantifyingextrapolating} & MC & 10 & QA & \\
\midrule
\multirow{4}{*}{\rotatebox[origin=c]{90}{\textit{Safety}}}
  & Winogender MC (Female) \cite{rudinger2018genderbiascoreferenceresolution} & MC & 10 & QA & \\
  & Winogender MC (Male) \cite{rudinger2018genderbiascoreferenceresolution} & MC & 10 & QA & \\
  & Enterprise PII Classification & MC & 10 & QA & \\
  & BBQ \cite{parrish2022bbqhandbuiltbiasbenchmark} & MC & 3 & QA & \\
\bottomrule
\end{tabular}
\caption{\textbf{Downstream evaluation tasks.}
Tasks marked with \cmark{} belong to the 22-task CORE suite; all 46 tasks belong to CORE-Extended. Under the \textbf{QA} KV-sharing protocol, the context is prefilled using the final-recurrence cache and candidate continuations are scored against that cache. Under the \textbf{PPL} protocol, used for the separate LAMBADA BPB evaluation, the complete sequence is streamed token-by-token so that every prediction reads only final-recurrence KVs. Type denotes the task scoring rule: \textbf{MC} uses mean candidate NLL, \textbf{S} uses schema-based NLL, and \textbf{LM} requires an exact greedy continuation match.}
\label{tab:eval-tasks}
\end{table*}

\clearpage

\newpage
\section{Proof of KV Convergence}
\label{sec:convergence-of-attention}

In this section, we make precise the claim in \cref{sec:elucdiation} that fixed-point convergence of the hidden state induces convergence of the per-layer keys and values. Specifically, we show that if the recurrent block $\rec$ is a contraction with a fixed point $\hat h$, then for every attention layer $l \in \rec$, then $k_t^{(l)},v_t^{(l)} \to \hat k^{(l)}, \hat v^{(l)}$ at a rate controlled by the step-to-step residual $\lVert h_t - h_{t-1}\rVert$.

Furthermore, we comment on the scope of this result. First, there is no guarantee that looped architectures such as Parcae \citep{prairie2026parcaescalinglawsstable} and Huginn \citep{geiping_scaling_2025} are contractive. However, both prior works empirically observe convergence to fixed points, and we independently verify this behavior in \cref{sec:parcae-is-fixed-point}. Second, we are not claiming that self-attention itself is Lipschitz, which has been proved not to be globally Lipschitz on unbounded inputs \cite{kim2021lipschitzconstantselfattention}. Our proof only requires that the per-layer key and value \emph{projection}, $h \mapsto W_{K,V}^{(l)} \cdot \mathrm{RMSNorm}(h)$, is Lipschitz. This follows from standard bounds on linear maps composed with the Lipschitz bound of RMSNorm \citep{zhang2019rootmeansquarelayer,chen2026histanumcaestimatestate}.

\begin{theorem}\label{thm:kv_convergence}
Let $\rec:\mathbb{R}^d\to\mathbb{R}^d$ be $\kappa$-Lipschitz with $\kappa<1$, and for each layer $l\in\rec$ let $\phi_K^{(l)},\phi_V^{(l)}:\mathbb{R}^d\to\mathbb{R}^{d_h}$ be $L^{(l)}$-Lipschitz. Let $h_{t+1}=\rec(h_t)$, $k_t^{(l)}=\phi_K^{(l)}(h_t)$, $v_t^{(l)}=\phi_V^{(l)}(h_t)$. Then $\rec$ has a unique fixed point $\hat h$, and for every $t\le T$,
$$
\bigl\lVert k_t^{(l)}-k_T^{(l)}\bigr\rVert,\ \bigl\lVert v_t^{(l)}-v_T^{(l)}\bigr\rVert \;\le\; \frac{L^{(l)}}{1-\kappa}\,\lVert h_{t+1}-h_t\rVert,
$$
with $k_t^{(l)}\to\hat k^{(l)}:=\phi_K^{(l)}(\hat h)$ and $v_t^{(l)}\to\hat v^{(l)}:=\phi_V^{(l)}(\hat h)$ as $t\to\infty$.
\end{theorem}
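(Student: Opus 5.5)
The plan is to reduce everything to a bound on $\lVert h_t - h_T\rVert$ and then push it through the Lipschitz maps $\phi_K^{(l)},\phi_V^{(l)}$. Existence and uniqueness of $\hat h$ come directly from the Banach fixed-point theorem, since $\rec$ is a $\kappa$-contraction on the complete space $\mathbb{R}^d$. This also gives $h_t\to\hat h$ geometrically, with $\lVert h_t-\hat h\rVert\le\kappa^t\lVert h_0-\hat h\rVert$.

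The core step is a telescoping estimate on the hidden states. Iterating the contraction gives $\lVert h_{s+1}-h_s\rVert\le\kappa^{s-t}\lVert h_{t+1}-h_t\rVert$ for every $s\ge t$. For $t\le T$, write
\begin{equation*}
h_T-h_t=\sum_{s=t}^{T-1}(h_{s+1}-h_s),
\end{equation*}
and apply the triangle inequality together with the geometric series:
\begin{equation*}
\lVert h_T-h_t\rVert\le\sum_{s=t}^{T-1}\kappa^{s-t}\lVert h_{t+1}-h_t\rVert\le\frac{1}{1-\kappa}\lVert h_{t+1}-h_t\rVert.
\end{equation*}
The case $t=T$ is trivial, since the left-hand side is zero. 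The same argument with $T\to\infty$ gives the a posteriori bound $\lVert h_t-\hat h\rVert\le\frac{1}{1-\kappa}\lVert h_{t+1}-h_t\rVert$.

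Next apply the Lipschitz property of the projections:
\begin{equation*}
\lVert k_t^{(l)}-k_T^{(l)}\rVert=\lVert\phi_K^{(l)}(h_t)-\phi_K^{(l)}(h_T)\rVert\le L^{(l)}\lVert h_t-h_T\rVert.
\end{equation*}
Combining this with the previous step yields the stated bound, and the value bound follows identically. Convergence $k_t^{(l)}\to\hat k^{(l)}$ follows because $\phi_K^{(l)}$ is continuous and $h_t\to\hat h$. Quantitatively, $\lVert k_t^{(l)}-\hat k^{(l)}\rVert\le L^{(l)}\kappa^t\lVert h_0-\hat h\rVert$.

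No step is a real obstacle; the only care needed is in the telescoping step. There one must bound every increment by the \emph{first} increment $\lVert h_{t+1}-h_t\rVert$, using forward iteration of the contraction rather than backward. Otherwise one ends up with $\lVert h_t-h_{t-1}\rVert$, as in the informal statement in \cref{sec:stochastic-fixed}. That version also holds, but it costs an extra factor of $\kappa$, giving the bound $\frac{\kappa}{1-\kappa}\lVert h_t-h_{t-1}\rVert$. I would note this in passing to reconcile the theorem with the informal claim in the main text.
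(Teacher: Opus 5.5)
Your proposal is correct and follows the paper's proof essentially step for step. Both arguments get $\hat h$ from Banach's fixed-point theorem, use the forward bound $\lVert h_{s+1}-h_s\rVert\le\kappa^{s-t}\lVert h_{t+1}-h_t\rVert$ with telescoping and the geometric series to reach $\lVert h_T-h_t\rVert\le\lVert h_{t+1}-h_t\rVert/(1-\kappa)$, and then apply the Lipschitz property and continuity of $\phi_K^{(l)},\phi_V^{(l)}$. Your extra remarks (the $\kappa^t$ rate, and the $\frac{\kappa}{1-\kappa}\lVert h_t-h_{t-1}\rVert$ variant for $t\ge 1$) are valid. Note, though, that the latter is implied by the theorem's bound and is weaker than it, since $\lVert h_{t+1}-h_t\rVert\le\kappa\lVert h_t-h_{t-1}\rVert$; so the extra $\kappa$ is not really a "cost".
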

\begin{proof}
Lipschitz continuity of $\rec$ gives $\lVert h_{s+1}-h_s\rVert\le\kappa^{s-t}\lVert h_{t+1}-h_t\rVert$ for $s\ge t$. Summing,
\begin{align}
\lVert h_T-h_t\rVert
&\le \sum_{s=t}^{T-1}\lVert h_{s+1}-h_s\rVert \\
&\le \lVert h_{t+1}-h_t\rVert \sum_{j=0}^{T-t-1}\kappa^{j} & \text{via Triangle Inequality}\\
&\le \lVert h_{t+1}-h_t\rVert \sum_{j=0}^{\infty}\kappa^{j} \\
&\le \frac{\lVert h_{t+1}-h_t\rVert}{1-\kappa} & \text{via Geometric Sum Closed Form}
\end{align}
Lipschitz continuity of $\phi_K^{(l)}$ (resp.\ $\phi_V^{(l)}$) then gives $\lVert k_t^{(l)}-k_T^{(l)}\rVert\le L^{(l)}\lVert h_t-h_T\rVert$, yielding the stated bound. Existence of $\hat h$ and convergence $h_t\to\hat h$ follow from Banach's fixed-point theorem, and continuity of $\phi_{K,V}^{(l)}$ transfers convergence to the keys and values.
\end{proof}

\section{KV-Sharing Error Subspace}
\label{sec:low-dim-construction}

In this section, we formalize the construction of the KV Sharing Error Subspace. This construction is inspired by \cite{Li2026-ge}.

Looking only at the first layer in the recurrent block and only during the first recurrence, let $a_\mathrm{shared} \in \mathbb{R}^{N \times d_h}$ be attention output for $N$ tokens attending to only final recurrence KV cache, while $a_{\mathrm{unshared}} \in \mathbb{R}^{N \times d_h}$ is the attention output using the first recurrence KV for the same $N$ tokens (we use $N=24$K tokens).

Then, define $\delta = a_{\mathrm{unshared}}-a_\mathrm{shared}$ as the per-token differences between the attention outputs using first recurrence KV and shared KV.

We perform SVD on $\delta$. In \cref{fig:low-rank-subspace}, we observe that variance within $\delta$ is low-rank and well explained with 8 right singular vectors, defining our KV-sharing error subspace.
$$\begin{aligned}
\delta  &= U \Sigma V^\top\in \mathbb{R}^{N\times d_h} \\
B_8 &= V_{:,1:8}
\end{aligned}$$ 
\begin{figure}
    \centering
    \includegraphics[width=0.5\textwidth]{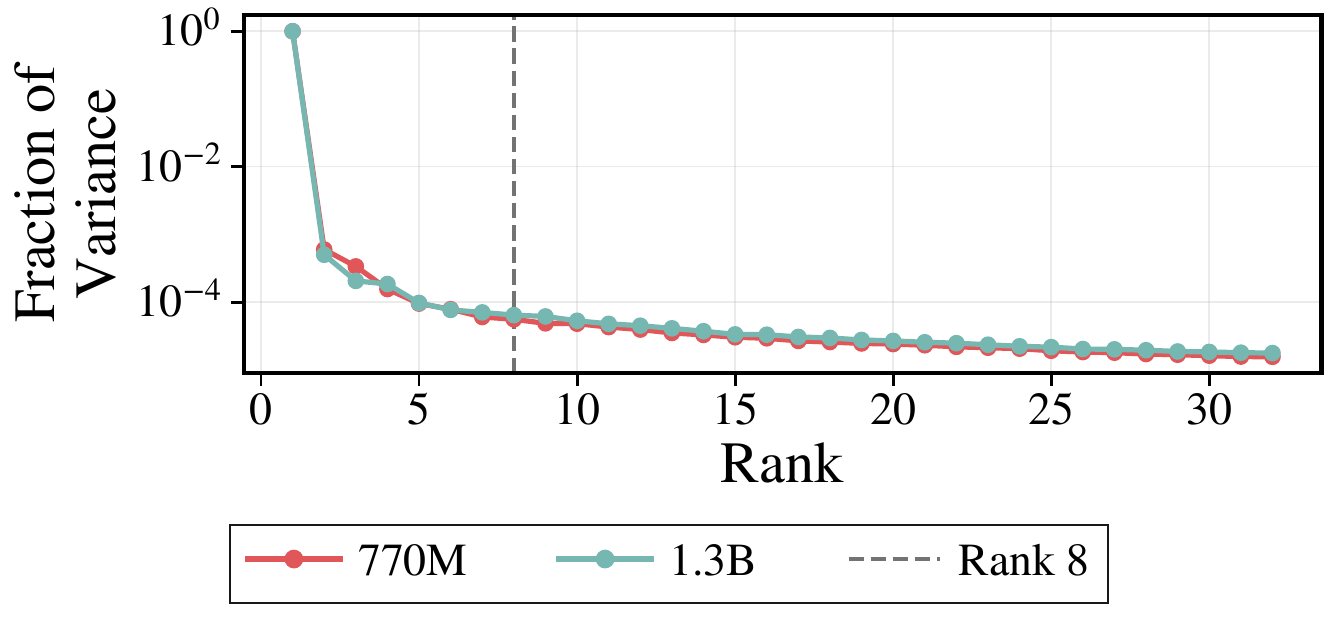}
        \vspace{-1.7em}
        \caption{\textbf{KV-sharing Error Subspace is Low Rank.} The variance within KV-sharing error subspace is well explained with a few directions.}
    \label{fig:low-rank-subspace}
\end{figure}
\paragraph{Naive Patching}
Here we provide a more formal explanation of our \emph{Naive Patching} work. We take the average across per token differences within the subspace and project it back into the hidden dimension.
\begin{equation}
    \bar{a} = \left( \frac{1}{N} \mathbf{1}^\top \delta B_8 \right)B_8^\top
\end{equation}
Then, during KV-sharing decode, we add the average projected subspace error to the first layer, first recurrence attention output per token.
\begin{equation}
    \hat{a}_{\mathrm{shared}}^t = a_\mathrm{shared}^t + \bar{a}
\end{equation}

All other layers receive no patch, and the first layer on all subsequent recurrences receives no patch. \emph{Naive Patching} is nearly lossless in terms of KV-sharing quality (\cref{tab:compression-patching}).

\section{Parcae and Huginn Converge to Fixed-Points}
\label{sec:parcae-is-fixed-point}

In this section, we provide evidence that Parcae \cite{prairie2026parcaescalinglawsstable} and Huginn \cite{geiping_scaling_2025} contract to fixed-points.

\paragraph{Setup.}
We evaluate the publicly released Parcae 140M, 370M, 770M, and 1.3B checkpoints \cite{prairie2026parcaescalinglawsstable}, together with Huginn 3.5B \cite{geiping_scaling_2025}, without additional training. For each model, we process 1,000 sequences of length 2,048 from the \texttt{climbmix} validation set in BF16. We evaluate Parcae for 15 recurrences and Huginn for its standard 32 recurrences. At each recurrence $T$, we measure the relative change in the recurrent residual state,
\begin{equation}
    \Delta_h(T)
    =
    \mathbb{E}_{x}\!\left[
    \frac{\lVert h_T-h_{T-1}\rVert_{\mathrm{RMS}}}
         {\lVert h_{T-1}\rVert_{\mathrm{RMS}}}
    \right],
\end{equation}
and the relative change in attention patterns,
\begin{equation}
    \Delta_A(T)
    =
    \mathbb{E}_{x,\ell,j}\!\left[
    \frac{\lVert A_T^{(\ell,j)}-A_{T-1}^{(\ell,j)}\rVert_F}
         {\lVert A_{T-1}^{(\ell,j)}\rVert_F}
    \right],
\end{equation}
where $\ell$ and $j$ index recurrent-block layers and attention heads, respectively.

\paragraph{Results.}
As shown in \cref{fig:baseline-fixed-point-convergence}, residual states and attention patterns stabilize with recurrence depth across all models. Parcae 140M--770M converge rapidly, while Parcae 1.3B converges more slowly; Huginn exhibits the same behavior over its 32-recurrence horizon. Fixed-point dynamics are already present in the original pretrained models and are not specific to our KV-sharing experiments. All of these models were trained with stochastic-depth.

\begin{figure*}[p]
    \centering
    \includegraphics[width=\textwidth]{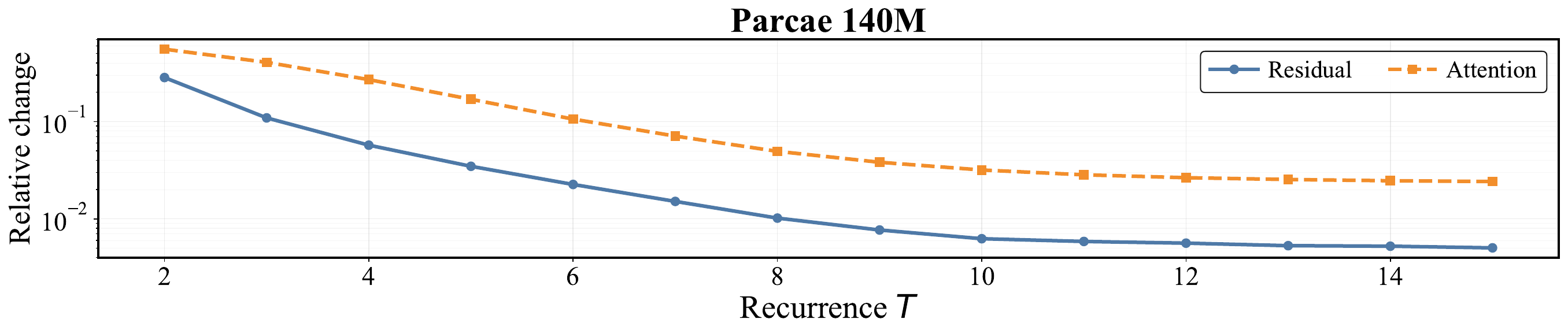}
    \includegraphics[width=\textwidth]{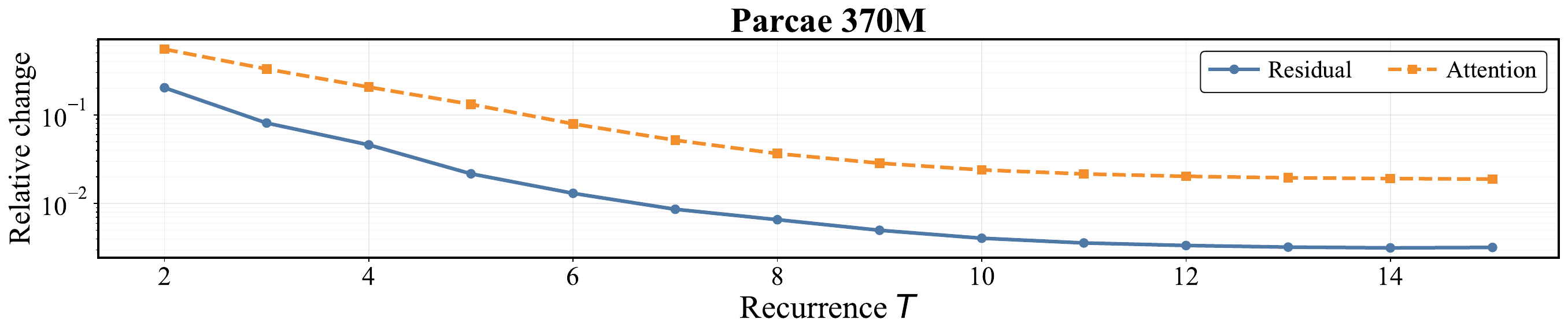}
    \includegraphics[width=\textwidth]{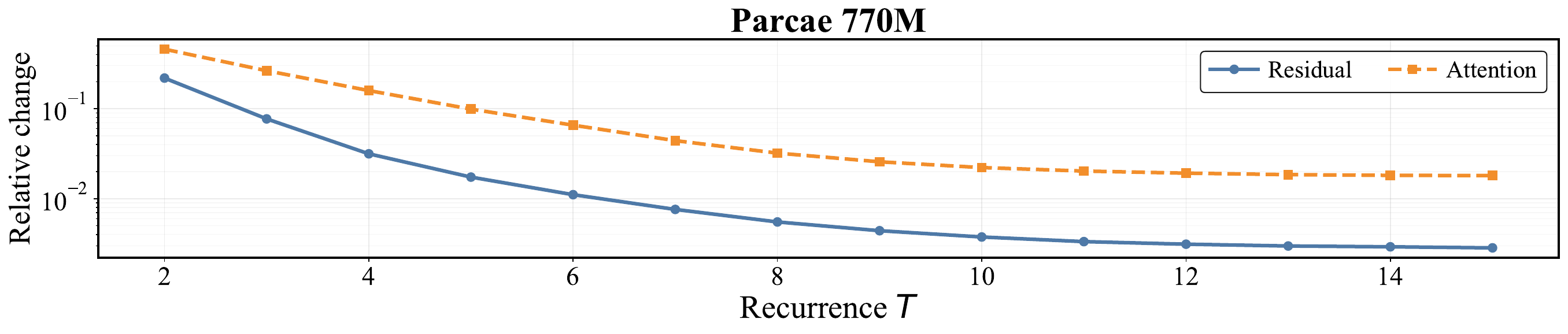}
    \includegraphics[width=\textwidth]{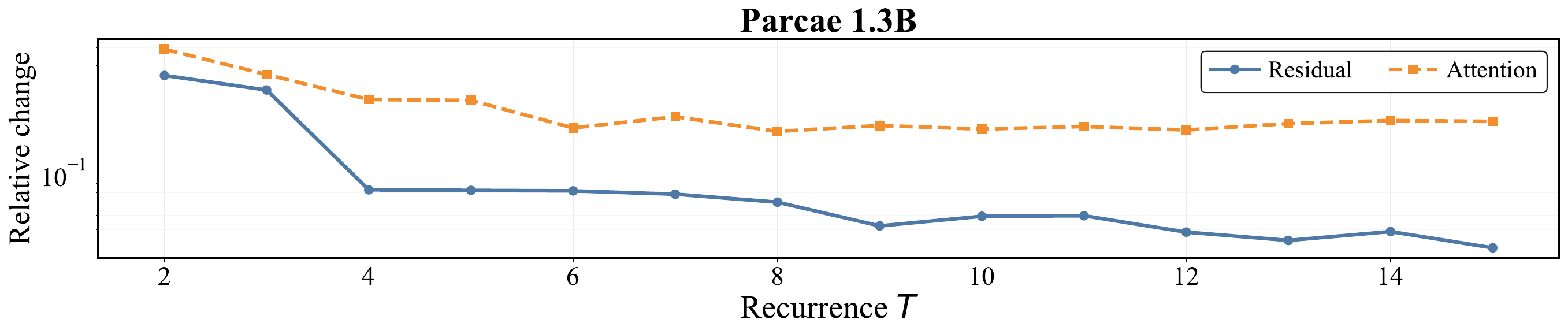}
    \includegraphics[width=\textwidth]{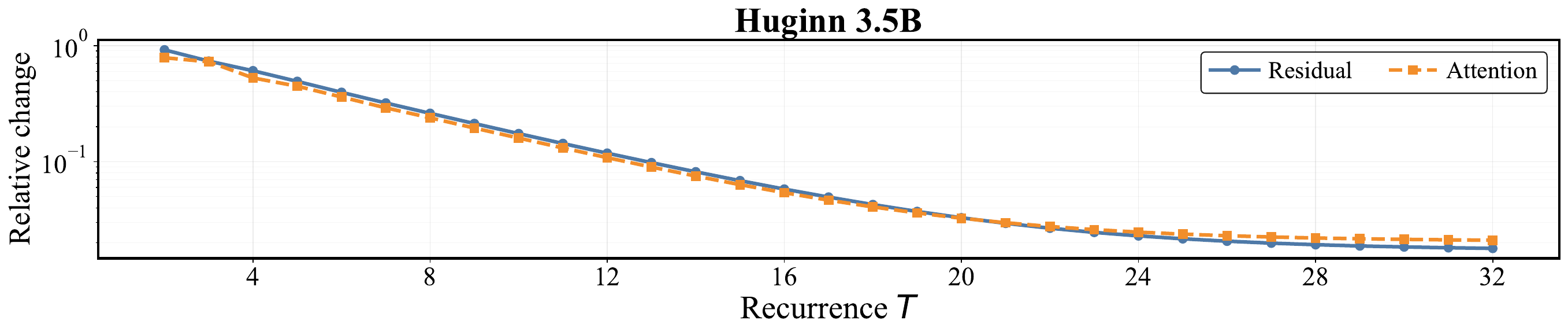}
    \caption{\textbf{Fixed-point convergence of pretrained looped models.}
    Relative changes in the recurrent residual state and attention pattern decrease with recurrence depth for the released Parcae models and Huginn.}
    \label{fig:baseline-fixed-point-convergence}
\end{figure*}

\newpage
\section{Expanded Results Comparing Stochastic and Static Sampling Depth Training}
\label{sec:expanded-stochastic-fixed}

In this section, we investigate the emergence of KV-sharing by connecting it to fixed-point convergence in looped architectures. Specifically, we train size-matched Parcae models, using either stochastic or static recurrence depth, and compare their residual convergence, attention convergence, and quality under KV-sharing.

\paragraph{Setup.} We follow the experimental setup of \citet{prairie2026parcaescalinglawsstable}, using Parcae \cite{prairie2026parcaescalinglawsstable} as the architectural backbone for our experiments (see \cref{sec:model-configurations} for model definitions), and trained with $\meanrecurrence = 8$ and $\meanbackward = 4$. For stochastic depth training, we use a Poisson distribution following \cite{prairie2026parcaescalinglawsstable}, while for static depth training, we forward pass exactly $\meanrecurrence$ and a truncated backpropagation of exactly $\meanbackward$. We train on \texttt{climbmix} \cite{diao2025nemotronclimbclusteringbasediterativedata}, using approximately 1:7 param-to-token ratios, and using the hyperparameters defined in \cref{sec:hyperparameters}. Hyperparameters were selected after extensive tuning following \cref{sec:hyperparameter-transfer}. Evaluation is done following \cref{sec:eval-setup}.

\paragraph{Stochastic and Static Depth Training.}
We compare models trained with stochastic recurrence depth against models trained at a static depth of $T=8$. Both exhibit similar training-loss trajectories (\cref{fig:stochastic-static-training}) and comparable uncompressed quality. However, KV-sharing is nearly lossless for stochastic-depth models at every scale, while static-depth models degrade substantially under compression (\cref{tab:stochastic-static-quality}). Thus, the emergence of KV-sharing is associated with stochastic-depth training rather than ordinary language-model optimization alone.

\begin{figure}[!ht]
    \centering
    \includegraphics[width=\linewidth]{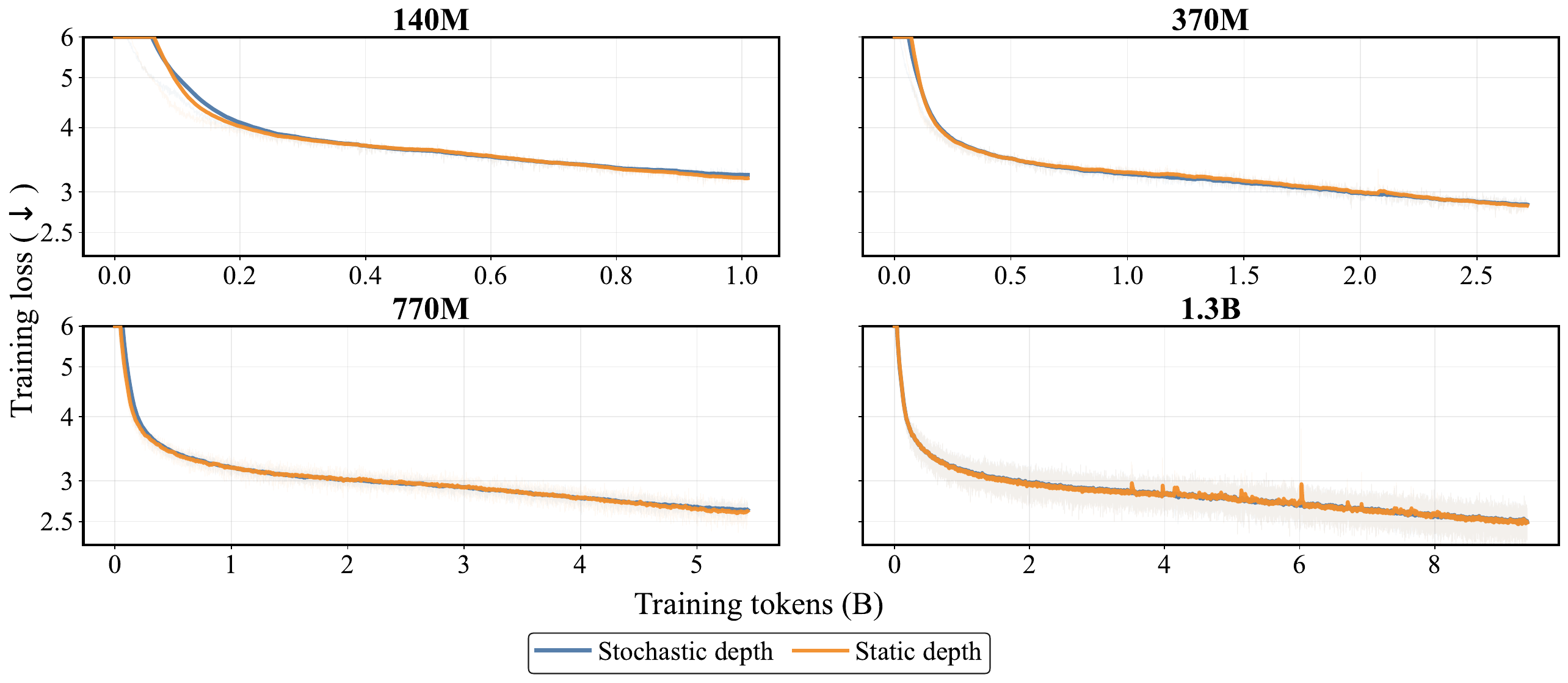}
    \caption{\textbf{Training Curves for Stochastic- and Static-Depth Models.}
    Training loss as a function of training tokens across model scales. The curves closely overlap, showing that stochastic- and static-depth training exhibit similar optimization dynamics.}
    \label{fig:stochastic-static-training}
\end{figure}

\begin{table}[!ht]
    \centering
    \setlength{\tabcolsep}{5pt}
    \renewcommand{\arraystretch}{1.15}
    \small
    \begin{tabular}{cl|ccc|ccc}
        \toprule
        & \multirow{2}{*}{\textbf{Training}} & \multicolumn{3}{c}{\textbf{Val. PPL} ($\downarrow$)} & \multicolumn{3}{c}{\textbf{CORE} ($\uparrow$)} \\
        \cmidrule(lr){3-5} \cmidrule(lr){6-8}
        & & Unshared & Shared & $\mathbf{\Delta}$ & Unshared & Shared & $\mathbf{\Delta}$ \\
        \midrule
        \multirow{2}{*}{\rotatebox[origin=c]{90}{\emph{140m}}} & Stochastic & 25.68 & 25.67 & -0.02 & 6.41 & 6.48 & +0.08 \\
        & Static & 24.48 & 25.79 & +1.31 & 9.17 & 8.55 & -0.62 \\
        \cdashline{1-8}
        \multirow{2}{*}{\rotatebox[origin=c]{90}{\emph{370m}}} & Stochastic & 17.34 & 17.36 & +0.02 & 14.49 & 14.36 & -0.13 \\
        & Static & 17.30 & 20.00 & +2.70 & 15.00 & 14.22 & -0.78 \\
        \cdashline{1-8}
        \multirow{2}{*}{\rotatebox[origin=c]{90}{\emph{770m}}} & Stochastic & 14.03 & 14.04 & +0.01 & 18.91 & 18.78 & -0.13 \\
        & Static & 14.10 & 17.20 & +3.10 & 18.17 & 17.11 & -1.06 \\
        \cdashline{1-8}
        \multirow{2}{*}{\rotatebox[origin=c]{90}{\emph{1.3b}}} & Stochastic & 12.33 & 12.34 & +0.01 & 23.63 & 23.34 & -0.29 \\
        & Static & 12.32 & 15.17 & +2.84 & 22.59 & 20.82 & -1.77 \\
        \bottomrule
    \end{tabular}
    \caption{\textbf{Quality under stochastic- and static-depth training.} We compare the standard per-recurrence cache (Unshared) with final-recurrence KV sharing (Shared). $\Delta$ is Shared minus Unshared.}
    \label{tab:stochastic-static-quality}
\end{table}

\section{Extended Empirical Analysis on KV-Sharing Fixed Point Convergence Failures}
\label{sec:fixed-point-no-good}

In this section, we provide extended empirical evidence that KV-sharing failures in models trained with stochastic depth are fixed-point convergence failures. Specifically, we provide expanded empirical evidence that while contractive residual dynamics are necessary to induce KV-shareability, models can still fail to converge to the correct fixed points.

\paragraph{Setup.} We use prior Parcae looped models \cite{prairie2026parcaescalinglawsstable} to explore our observations about KV-sharing failures. We additionally use models trained in \cref{sec:quality}, showing how \sysname impacts fixed-point convergence, before and after finetuning.
Following convergence observations made in \cref{sec:parcae-is-fixed-point} and the saturation of test-time compute made by \citet{prairie2026parcaescalinglawsstable}, we use $T=8$ test-time recurrences, as both the residual differences and downstream quality saturate at this point. We use \texttt{climbmix} \cite{diao2025nemotronclimbclusteringbasediterativedata} validation data for all evaluations and explorations.

\begin{table}[!ht]
    \centering
    \setlength{\tabcolsep}{5pt}
    \renewcommand{\arraystretch}{1.15}
    \small
    \begin{tabular}{l|ccc}
        \toprule
\textbf{Model} & $\mathbf{\mathbb{E}[\|h_8^{(\text{shared})}-h_8^{(\text{unshared})}\|_2]}$ & $\mathbf{\mathbb{E}[\|h_8^{(\text{unshared})}\|_2]}$ & $\mathbf{\mathbb{E}\left[\frac{\|h_8^{(\text{shared})}-h_8^{(\text{unshared})}\|_2}{\|h_8^{(\text{unshared})}\|_2}\right]}$ \\
        \midrule
        140M & 0.4588 & 56.3 & 0.82 \\
        370M & 2.2766 & 135.2 & 1.69 \\
        770M & 186.35 & 153.4 & 134.8 \\
        1.3B & 342.65 & 301.6 & 177.5 \\
        \bottomrule
    \end{tabular}
    \caption{Mean difference between shared and unshared final latents, mean unshared latent norm, and mean relative difference.}
    \label{tab:fixed-point-difference}
\end{table}

\paragraph{KV-sharing Fails to Converge Correctly.} As visualized by \cref{fig:fixed-point-failure}, we see that models that do not have KV-shareability from \cref{tab:kv-sharing} (e.g., Parcae 770M and 1.3B) seem to have disjoint sets between unshared and shared fixed points, while models with KV-shareability have approximately similar distributions. We capture this numerically in \cref{tab:fixed-point-difference}, where we find that models that have KV-shareability approach a relatively similar fixed point. For example, Parcae 140M and 370M have a relative difference of 0.82\% and 1.69\%, respectively, which means they approached the same relative fixed point. However, Parcae 770M and 1.3B have an average relative difference of 134.8\% and 177.5\%, respectively, which means that they converge to different points with and without KV-sharing. Thus, Parcae 770M and 1.3B fail to have KV-shareability as they do not converge to the same fixed points as the base model.

\begin{table}[!ht]
    \centering
    \setlength{\tabcolsep}{5pt}
    \renewcommand{\arraystretch}{1.15}
    \small
    \begin{tabular}{l|cc}
        \toprule
        \textbf{Model} & \textbf{Baseline: Mean Relative Difference} & \textbf{\sysname: Mean Relative Difference} \\
        \midrule
        140M & 0.0526 & 0.0059 \\
        370M & 0.1304 & 0.0059 \\
        770M & 0.0957 & 0.0070 \\
        1.3B & 0.1253 & 0.0049 \\
        \bottomrule
    \end{tabular}
    \caption{\textbf{Mean Relative Difference of Fixed Points With and Without KV-sharing Before and After Post-training with \sysname.} Mean relative difference at each model scale for the baseline and \sysname.}
    \label{tab:relative-difference}
\end{table}

We further verify these results empirically with models trained in \cref{sec:quality}, where we find that all trained baseline Parcae models fail to converge to the same fixed point under KV-sharing (see \cref{tab:relative-difference}), preventing them from having KV-shareability. However, after finetuning with \sysname, we empirically observe that models converge to the correct fixed point, as seen by \cref{tab:relative-difference}.

\section{Extended Finetuning Setup}
\label{sec:finetuning-extended}

In this section, we provide more details about the \sysname finetuning process.

\paragraph{Setup.} Following the training procedures of Parcae in \citet{prairie2026parcaescalinglawsstable}, all models are trained using batch sizes of 256 at context length of 2048. Recurrence depth is sampled stochastically from $\Lambda \sim \mathrm{Poisson}(\mu_\mathrm{rec})$, where $\mu_\mathrm{rec}=8$, as in pretraining. 

We use \sysname on two generations of models: previously released Parcae models (770m and 1.3B) \cite{prairie2026parcaescalinglawsstable} and new Parcae style models trained from scratch (including the MLA and KDA models in \cref{sec:quality}. The publicly released models are finetuned using \sysname with the same hyperparameters as their pretraining setup (see \citet{prairie2026parcaescalinglawsstable}) and use 140M \texttt{fineweb-edu} \cite{penedo2024finewebdatasetsdecantingweb} tokens. The new Parcae models are pretrained using hyperparameters described in \cref{sec:hyperparameters} on \texttt{climbmix} \cite{diao2025nemotronclimbclusteringbasediterativedata} tokens, and are further finetuned on 140M \texttt{climbmix} tokens using \sysname. Per-model finetuning learning rates and $\gamma$ values are listed in \cref{sec:hyperparameters}.

All \sysname results are averaged across three differently seeded finetuning runs evaluated on Core and Core-Ext using three different seeds as well.

\section{LoRA Finetuning for KV-sharing}
\label{sec:lora-ft}

In this section, we show that KV-sharing can be achieved using a LoRA\cite{hu2022lora} finetuning version of \sysname, which we denote $\sysname^\dagger$.

Following our findings in \cref{sec:elucdiation} that the KV-sharing error subspace is low-rank (\cref{fig:low-rank-subspace}), we finetune released Parcae 770M and 1.3B with low-rank adapters on the key and value weights of the first layer of the recurrent block. We use rank-8 adapters on both key and value weights and freeze the rest of the model weights, leading to 41K trainable parameters in the 770M model and 49K trainable parameters in the 1.3B model. We finetune using 65M \texttt{fineweb-edu} tokens with batch size 32 and sequence length 2048.

We use the same \sysname loss function defined in \cref{sec:methodology}. Hyperparameter details are in \cref{sec:hyperparameters}. In \cref{tab:quality-ft-lora}, both \sysname and $\sysname^\dagger$ enable KV-sharing while maintaining quality across benchmarks.

\begin{table}[!t]
    \centering
    \setlength{\tabcolsep}{4pt} %
    \renewcommand{\arraystretch}{1.1} %
    \small
   \begin{tabular}{cl|ccc|ccc|ccc}
        \toprule
        & \multirow{2}{*}{\textbf{Model}} & \multicolumn{3}{c}{\textbf{Val. PPL} ($\downarrow$)} & \multicolumn{3}{c}{\textbf{Core} ($\uparrow$)} & \multicolumn{3}{c}{\textbf{Core-Extended} ($\uparrow$)} \\
        \cmidrule(lr){3-5} \cmidrule(lr){6-8} \cmidrule(lr){9-11}
        & & Unshared & Shared & $\mathbf{\Delta}$ & Unshared & Shared & $\mathbf{\Delta}$  & Unshared & Shared & $\mathbf{\Delta}$  \\
        \midrule
        \multirow{3}{*}{\rotatebox[origin=c]{90}{\emph{770M}}}
& Parcae & 11.87 & 34.20 & +22.33 & 25.05 & 21.10 & -3.94 & \textbf{15.69} & 13.95 & -1.74 \\
& + Standard FT & 11.86 & 21.61 & +9.75 & 24.74 & 23.42 & -1.32 & 15.34 & 14.54 & -0.80 \\
& + \textbf{\sysname} & \textbf{11.85} & \textbf{11.86} & +0.01 & \textbf{25.15} & \textbf{25.19} & \textbf{+0.04} & 15.35 & \textbf{15.29} & \textbf{-0.06} \\
& + \textbf{\sysname}$^\dagger$ & 11.89 & 11.89 & \textbf{+0.00} & 24.69 & 24.71 & +0.02 & 15.32 & 15.19 & -0.13\\
            \cdashline{1-11}
        \multirow{3}{*}{\rotatebox[origin=c]{90}{\emph{1.3B}}}
& Parcae & \textbf{10.78} & 1.6e4 & +1.6e4 & 28.78 & 21.33 & -7.45 & 16.73 & 13.48 & -3.25 \\
& + Standard FT & 10.79 & 18.52 & +7.73 & \textbf{29.01} & 27.49 & -1.51 & 17.23 & 16.66 & -0.58 \\
& + \textbf{\sysname} & 10.79 & \textbf{10.78} & \textbf{-0.01} & 28.63 & \textbf{28.67} & +0.04 & \textbf{17.32} & \textbf{17.41} & \textbf{+0.09} \\
& + \textbf{\sysname}$^\dagger$  & 10.81 & 10.81 & 0.00 & 28.28 & 28.35 & \textbf{+0.07} & 16.94 & 16.96 & +0.02\\ 
        \bottomrule
    \end{tabular}
    \caption{\textbf{Quality of \sysname Finetuning Extended.} We provide the extended results of finetuning using \sysname and $\sysname^\dagger$. Both enable KV-sharing while maintaining quality. }
    \label{tab:quality-ft-lora}
    \vspace{-0.7em}
\end{table}

\section{Extended Discussion of Experimental Setup for MLA and KDA Evaluation}
\label{sec:extended-mla-kda}

In this section, we provide a more detailed description of the experimental setup for evaluating MLA \cite{deepseekai2024deepseekv2strongeconomicalefficient} and KDA \cite{kimiteam2025kimilinearexpressiveefficient} versions of Parcae \cite{prairie2026parcaescalinglawsstable}. We will fully discuss how \emph{KV-sharing} is done for both alternative sequence mixing methods. Hyperparameters and model configurations can be found in \cref{sec:hyperparameters} and \cref{sec:model-configurations}, respectively. \sysname finetuning is explained in \cref{sec:finetuning-extended}.

\paragraph{KV-sharing with Multi-Latent Attention} Similarly to MHA \cite{vaswani2023attentionneed}, \emph{KV-sharing} with MLA \cite{deepseekai2024deepseekv2strongeconomicalefficient} is done by discarding the latent cache of indeterminate recurrences, and keeping only the final recurrences latent cache. More formally, for an input $h_t \in \mathbb{R}^{n \times d}$ at recurrence $t$, the latent cache and the decoupled RoPE \cite{su2023roformerenhancedtransformerrotary} caches are calculated as
\begin{equation}
     c_t^{\text{KV}} = W^\text{KV} h_t \qquad \qquad k_t^{\text{RoPE}} = \text{RoPE} ( W^{\text{RoPE}} h_t ),
\end{equation}
where $\text{RoPE}(\cdot)$ denotes the RoPE rotation \cite{su2023roformerenhancedtransformerrotary}.
Thus with KV-sharing MLA models, we discard all $c_t^{\text{KV}},k_t^{\text{RoPE}}$ where $t < T$ keeping only the cache at $t = T$. 

\paragraph{KV-sharing with Kimi Delta Attention} Again, like both MHA \cite{vaswani2023attentionneed} and MLA \cite{deepseekai2024deepseekv2strongeconomicalefficient}, KV-sharing with KDA \cite{kimiteam2025kimilinearexpressiveefficient} is done by keeping only the final recurrences sequence-mixing hidden state. More formally, for an input $h_t$ at recurrence $t$, we focus on the linear update rule used in KDA \cite{kimiteam2025kimilinearexpressiveefficient}
\begin{equation}
    S_{t,i} = ( I - \beta_{t,i} k_{t,i} k_{t,i}^\top) \text{Diag}(\alpha_{t,i}) S_{t,i-1} + \beta_{t,i}  k_{t,i}  v_{t,i}^\top \in \mathbb{R}^{d_k \times d_v}
\end{equation}
where $\beta_i = W^\beta h_{t,i} \in \mathbb{R}$, $\alpha_i = W^\alpha h_{t,i} \in \mathbb{R}^{d_k}$, $k_i = W^{\text{K}} h_{t,i}\in \mathbb{R}^{d_k}$, and $v_i = W_v h_{t,i} \in \mathbb{R}^{d_v}$ and $W^\beta,W^\alpha,W^\text{K},W^{\text{V}}$ are learned projections. Thus with KV-sharing KDA models, we discard all $S_{t,i}$ where $t < T$ and all $i$ up to our current decode token.

\section{Extended Compression and Patching Results for Newly Pretrained Models}
\label{sec:compression-patching-pretrained}

In \cref{tab:compression-patching-mla-kda} we explore the KV-sharing ability of the base pretrained Parcae models introduced in \cref{sec:quality} (\cref{tab:quality}, see \cref{sec:model-configurations} for configurations and \cref{sec:hyperparameters} for hyperparameters). All four MHA models have KV-sharing errors. Preserving the first-layer KVs from the first recurrence in addition to normal KV-sharing maintains lossless quality for all models except the 1.3B model. The 1.3B model also has further quality degradation when using \emph{Naive Patching}. Interestingly, in \cref{tab:quality} using \sysname on the 1.3B model leads to near lossless quality, performing much better than the sub-optimal (in terms of memory efficiency) additional first layer, first recurrence KV-cache during KV-sharing.

We also explore the KV-sharing ability of MLA and KDA variants of Parcae in \cref{tab:compression-patching-mla-kda}. MLA variants of Parcae have the same failure mode of KV-sharing as released Parcae models. Mean patching fails to recover quality for our MLA model at 770M size. KDA models perform well on KV-sharing out-of-the-box.

\begin{table}[!t]
    \centering
    \setlength{\tabcolsep}{5pt}
    \renewcommand{\arraystretch}{1.15}
    \small
    \begin{tabular}{cl|cccc}
        \toprule
        & \multirow{2}{*}{\textbf{Model}}
        & \multicolumn{4}{c}{\textbf{Val. PPL} ($\downarrow$)} \\
        \cmidrule(lr){3-6}
        & & \emph{Unshared}
        & \emph{Layer $> 1$, Recurrence $> 1$}
      & \emph{Naive Patching}
        & \emph{Shared} \\
        \midrule
        \multirow{3}{*}{\rotatebox[origin=c]{90}{\emph{MHA}}}
        & 140M & 15.51 & 15.50 & 15.52 & 16.99 \\
        & 370M & 11.57 & 11.56 & 11.56 & 3723 \\
        & 770M & 9.78 & 9.80 & 10.49 & 36.66 \\
        & 1.3B & 8.58 & 8.92 & 12.77 & 9.42 \\
        \cdashline{1-6}
        \multirow{3}{*}{\rotatebox[origin=c]{90}{\emph{MLA}}}
        & 140M & 15.56 & 15.54 & 15.55 & 15.55 \\
        & 370M & 11.67 & 11.64 & 11.66 & 22.72 \\
        & 770M & 9.75 & 9.75 & 11.30 & 4894 \\
        \cdashline{1-6}
        \multirow{3}{*}{\rotatebox[origin=c]{90}{\emph{KDA}}}
        & 140M & 16.02 & 16.00 & 15.99 & 15.99 \\
        & 370M & 11.74 & 11.73 & 11.73 & 11.77 \\
        & 770M & 10.13 & 10.13 & 10.13 & 10.16 \\
        \bottomrule
    \end{tabular}
    \caption{\textbf{KV-Sharing Failures on New Models.} Preserving first-layer KVs from the first recurrence matches the unshared baseline across model sizes and sequence mixers. Mean patching fails to fully recover quality for MLA at 770M. KV-sharing fails for MLA models as size increases. Surprisingly, KDA models have KV-sharing out-of-the-box.}
    \label{tab:compression-patching-mla-kda}
\end{table}

\newpage
\section{Optimal Hyperparameter Transferability Across Recurrence}
\label{sec:hyperparameter-transfer}

While several works have shown how optimal hyperparameter transfer varies with tokens \citep{bjorck2025scalingoptimallrtoken,shuai2024scalinglawlanguagemodels} and model size \citep{yangTensorProgramsTuning2022}, we explore how scaling recurrence \citep{prairie2026parcaescalinglawsstable} is affected. Specifically, in this section, we investigate how optimal hyperparameters (e.g., learning rate, batch size, etc.) are affected by increasing recurrence at multiple parameter classes.

\paragraph{Setup.} We follow the model configuration and architecture backbone of \citet{prairie2026parcaescalinglawsstable} (see \cref{sec:model-configurations} for extend details), and train using \climbmix. We evaluate the transferability of learning rate, momentum, warm-up ratios, and batch size across $\meanrecurrence \in \{2,4,8\}$ using 140M, 370M, and 770M parameter backbones. To ensure robustness, for each hyperparameter, recurrence, backbones data point, we train three models using different seeds, $42, 1234, 2345$, and then use the average of these runs. This totals to 504 runs for this ablation and approximately 4,685 H100 hours.

\begin{figure}[t]
    \centering
    \includegraphics[width=\linewidth]{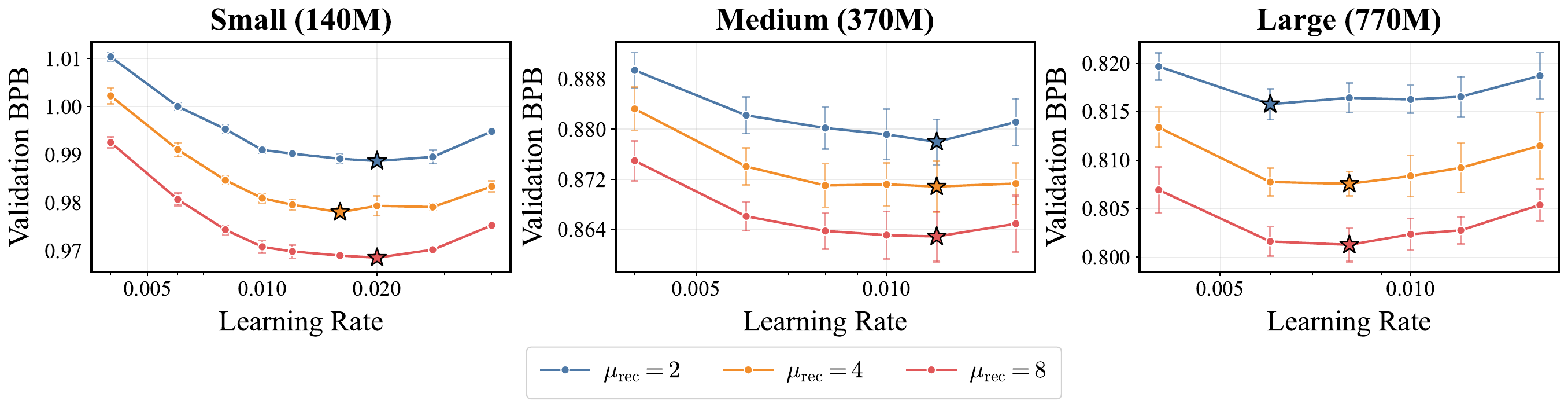}
    \caption{\textbf{Transferability of Learning Rate.} Optimal learning rate transfers strongly across deeper recurrences for all model backbones.}
    \label{fig:lr-transferability}
\end{figure}

\begin{figure}[t]
    \centering
    \includegraphics[width=\linewidth]{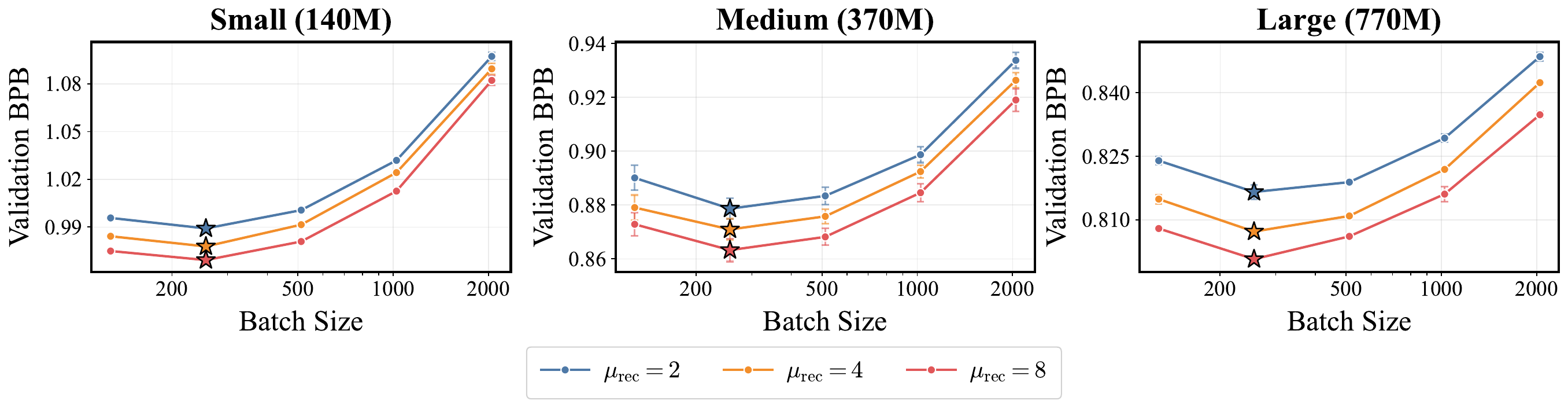}
    \caption{\textbf{Transferability of Batch Size.} Optimal batch size transfers strongly across deeper recurrences for all model backbones.}
    \label{fig:bs-transferability}
\end{figure}

\paragraph{Optimal Hyperparameters Transfer Strongly Across Increased Recurrence.} As seen in \cref{fig:lr-transferability} and \cref{fig:bs-transferability}, we find that across all model classes optimal learning rate and batch size transfer strongly. We further find this to be true of other minor hyperparameters (e.g., muon momentum and warmup ratio), with some minor variation in flat regions. This finding implies that lower recurrence (i.e., $\meanrecurrence = 2$) can be used to cheaply sweep for optimal hyperparameters, which in general remain optimal at higher recurrences. drastically reduces the necessary FLOPs required to find optimal hyperparameters for looped architectures.

Furthermore, as seen in \cref{tab:hyp-trans}, we observe a high Spearman rank correlation between the validation BPB across hyperparameter sweeps at different recurrences. While Spearman correlation is not 1 for all hyperparameters, we find that this is derived from noisy and flat hyperparameter landscapes. Thus, generally, we observe that most hyperparameters appear to transfer across recurrences, motivating our use of extremely low recurrence for hyperparameter tuning.

\begin{table}[t]
    \centering
    \vspace{-1em}
\begin{tabular}{ll|cccc|c}
\toprule
Hyperparameter & Model & 2 $\iff$ 4 & 2 $\iff$ 8 & 4 $\iff$ 8 & Mean & $\Delta$BPB ($\times 10^{-3}$) \\
\midrule
Learning Rate & 140M & 0.950 & 0.967 & 0.900 & 0.939 & 41.805 \\
Learning Rate & 370M & 0.943 & 1.000 & 0.943 & 0.962 & 26.483 \\
Learning Rate & 770M & 0.829 & 0.829 & 1.000 & 0.886 & 18.373 \\
Muon Momentum & 140M & 0.900 & 0.700 & 0.900 & 0.833 & 37.269 \\
Muon Momentum & 370M & 1.000 & 1.000 & 1.000 & 1.000 & 25.083 \\
Warmup Fraction & 140M & 0.600 & 0.900 & 0.700 & 0.733 & 22.614 \\
Warmup Fraction & 370M & 0.700 & 0.700 & 1.000 & 0.800 & 19.451 \\
Batch Size & 140M & 1.000 & 1.000 & 1.000 & 1.000 & 128.046 \\
Batch Size & 370M & 1.000 & 1.000 & 1.000 & 1.000 & 70.462 \\
Batch Size & 770M & 1.000 & 1.000 & 1.000 & 1.000 & 47.777 \\
\bottomrule
\end{tabular}
    \caption{\textbf{Spearman Rank Correlation of Hyperparameter curves across Recurrence.} Here $x \iff y$ is the Spearman correlation between the validation BPB hyperparameter curves of a model trained with $\meanrecurrence = x$ and a model trained with $\meanrecurrence = y$. Furthermore, $\Delta$ BPB indicates the validation BPB span across all (recurrence schedule $\times$ hyperparameter values) points for that specific parameter class. While Spearman correlation is not $1$ in all transfers, we observe that low Spearman correlation coincides with a lower $\Delta$ BPB span, which resolves to slight noise on a relatively flat landscape for hyperparameter tuning.}
    \label{tab:hyp-trans}
\end{table}

\begin{figure}[t]
    \centering
    \vspace{-0.4em}
    \includegraphics[width=\linewidth]{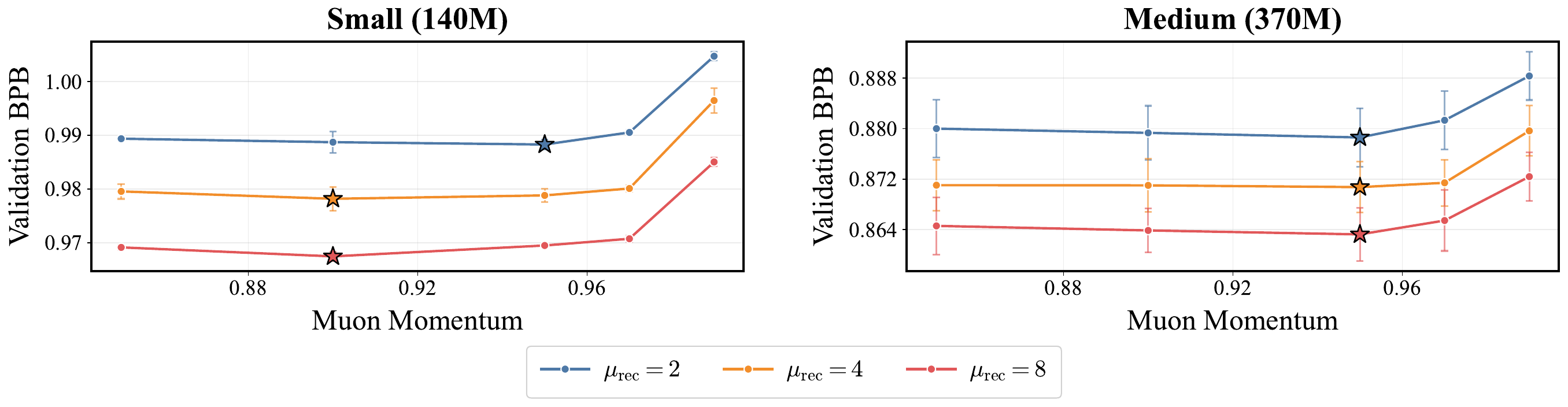}
    \vspace{-0.4em}
    \caption{\textbf{Transferability of Muon Momentum.} We observe optimal muon momentum to transfer well across recurrences, with some minor variation in relatively flat regions.}
    \vspace{-0.4em}
    \label{fig:muon-momentum-transfer}
\end{figure}

\begin{figure}[t]
    \centering
    \vspace{-0.4em}
    \includegraphics[width=\linewidth]{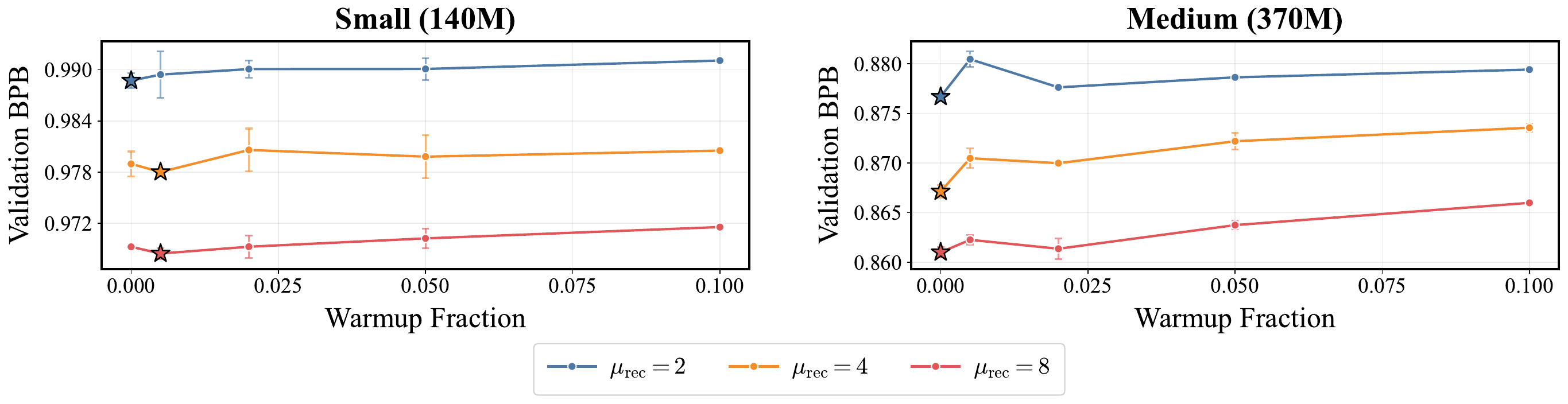}
    \vspace{-0.4em}
    \caption{\textbf{Transferability of Warmup Ratio.} We observe optimal warmup ratios to transfer well across recurrences, with some minor variation in relatively flat regions.}
    \vspace{-0.7em}
    \label{fig:warmup-transfer}
\end{figure}

\newpage
\section{Model Definitions and Configurations}
\label{sec:model-configurations}

In this section, we discuss the model definition used for our experiments. For our experiments done in \cref{sec:elucdiation} for our \emph{Stochastic} and \emph{Static} comparison, the models follow the configuration and hyperparameters presented in \cref{tab:model-definitions} and \cref{sec:hyperparameters} respectively, with two notable exceptions. First, for both models, we use a 1:7 param to token ratio. Second, for the \emph{Static} depth models, recurrence depth sampling does not follow the Poisson distribution, and is instead held deterministically at $\mu_\mathrm{rec}=8$ recurrences.

Our models present in pretraining results in \cref{sec:quality} also use the model configurations in \cref{tab:model-definitions} along with the hyperparameters present in \cref{tab:model-hyperparams}. Our optimizer parameter group assignment for all models (including those present in \cref{sec:elucdiation}) can be found in \cref{tab:adamw-v-muon}.

\begin{table}[!htbp]
  \centering
  \small
  \begin{tabular}{lccccc}
  \toprule
      & & Small (140M) & Medium (370M) & Large (770M) & XLarge (1.3B) \\
  \midrule
      \multirow{5}{*}{\rotatebox[origin=c]{90}{\textit{Architecture}}}
      & Layers in $\pre$ 
          & 2 & 4 & 6 & 8 \\
      & Layers in $\rec$ 
          & 2 & 4 & 6 & 8 \\
      & Layers in $\coda$ 
          & 2 & 4 & 6 & 8 \\
      & $d_{\text{model}}$
          & 768 & 1{,}024 & 1{,}280 & 1{,}536 \\
      & $d_{\text{intermediate}}$
          & 3{,}072 & 4{,}096 & 5{,}120 & 6{,}144 \\
  \midrule
     \multirow{3}{*}{\rotatebox[origin=c]{90}{\textit{MHA}}}
      & Attention
          & \multicolumn{4}{c}{Causal Self-Attention \citep{vaswani2023attentionneed} w/ QK-Norm  \citep{henry2020querykeynormalizationtransformers} } \\
      & Attention Heads
          & 6 & 8 & 10 & 12 \\
      & Head Dimension
          & \multicolumn{4}{c}{128} \\
      & Pos.\ Embed.
          & \multicolumn{4}{c}{RoPE ($\theta{=}50{,}000$) \citep{su2023roformerenhancedtransformerrotary}} \\
  \midrule
     \multirow{6}{*}{\rotatebox[origin=c]{90}{\textit{MLA}}}
      & Attention
          & \multicolumn{4}{c}{Causal Multi-Head Latent Attention \citep{deepseekai2024deepseekv2strongeconomicalefficient}} \\
      & Attention Heads
          & 6 & 8 & 10 & 12 \\
      & KV LoRA Rank
          & \multicolumn{4}{c}{512} \\
      & QK NoPE Head Dimension
          &\multicolumn{4}{c}{128} \\
      & QK RoPE Head Dimension  
          & \multicolumn{4}{c}{64} \\
      & V Head Dimension
          & \multicolumn{4}{c}{128} \\
      & Pos.\ Embed.
          & \multicolumn{4}{c}{RoPE ($\theta{=}50{,}000$) \citep{su2023roformerenhancedtransformerrotary}} \\
  \midrule
     \multirow{5}{*}{\rotatebox[origin=c]{90}{\textit{KDA}}}
      & Attention
          & \multicolumn{4}{c}{Causal Kimi Delta Attention \citep{kimiteam2025kimilinearexpressiveefficient}} \\
      & Attention Heads
          & 6 & 8 & 10 & 12 \\
      & Decay Gate     
          & \multicolumn{4}{c}{Lower bound at $-5$}  \\
      & Head Dimension
          & \multicolumn{4}{c}{128} \\
      & Convolution Kernel Size
          & \multicolumn{4}{c}{4} \\
  \midrule
      \multirow{8}{*}{\rotatebox[origin=c]{90}{\textit{Shared Details}}}
      & MLP
          & \multicolumn{4}{c}{$\text{ReLU}^2$ \citep{zhang2024relu2winsdiscoveringefficient}} \\
      & Value Embeddings
          & \multicolumn{4}{c}{Gated, alternating layers \citep{tian2023resformerscalingvitsmultiresolution}} \\
      & Vocab Size
          & \multicolumn{4}{c}{32{,}768} \\
      & Norm
          & \multicolumn{4}{c}{RMS-Norm (Pre-Norm) \citep{zhang2019rootmeansquarelayer}} \\
      & Context Length
          & \multicolumn{4}{c}{2{,}048} \\
      & Bias
          & \multicolumn{4}{c}{None} \\
      & Init.
          & \multicolumn{4}{c}{Scaled-zero \citep{takase2025spikemorestabilizingpretraining, nanochat}} \\
      & Tied Embeddings
          & \multicolumn{4}{c}{Yes} \\
  \midrule
      \multirow{5}{*}{\rotatebox[origin=c]{90}{\textit{Parcae}}}
      & Injection
          & \multicolumn{4}{c}{Diagonal} \\
      & State Init.
          & \multicolumn{4}{c}{\texttt{like-init} \citep{geiping_scaling_2025}} \\
      & $\meanrecurrence$
          & \multicolumn{4}{c}{8} \\
      & Backprop Depth
          & \multicolumn{4}{c}{4} \\
      & Sampling
          & \multicolumn{4}{c}{Poisson (truncated, per-sequence)} \\
  \bottomrule
  \end{tabular}
  \caption{Model definitions of baseline Parcae \cite{prairie2026parcaescalinglawsstable}, Parcae with MLA, and Parcae with KDA.}
  \label{tab:model-definitions}
\end{table}

\begin{table}[!htbp]
  \centering
  \small
  \begin{tabular}{ll}
  \toprule
      \textbf{Optimizer} & \textbf{Parameters} \\
  \midrule
      \multirow{7}{*}{AdamW \citep{kingma2017adammethodstochasticoptimization}}
      & Token embeddings (\texttt{wte}) \\
      & LM head (\texttt{lm\_head}) \\
      & Normalization layers (\texttt{RMSNorm}) \\
      & Value embedding gates (\texttt{ve\_gate}) \\
      & All 1D parameters \\
      & Injection parameters ($\A$, $\dt$, $\B$) \\
      & Readout projection ($\C$) \\
  \midrule
      \multirow{2}{*}{Muon \citep{jordan2024muon}}
      & Attention projections ($W_Q$, $W_K$, $W_V$, $W_O$) \\
      & MLP weights ($W_{\text{fc}}$, $W_{\text{proj}}$) \\
  \bottomrule
  \end{tabular}
  \caption{Optimizer parameter group assignment for Parcae.}
  \label{tab:adamw-v-muon}
\end{table}

\section{Hyperparameters and Training Details}
\label{sec:hyperparameters}

In this section, we will discuss the hyperparameter configuration used in \cref{sec:quality}.

\paragraph{Pretraining.} All pretraining hyperparameters are present in \cref{tab:model-hyperparams}, with the notable exception of the \emph{Stochastic} and \emph{Static} recurrence depth models present in \cref{sec:elucdiation}. Those exceptions are explained in \cref{sec:expanded-stochastic-fixed}.

\begin{table}[!htbp]
  \centering
  \small
  \begin{tabular}{lcccc}
  \toprule
      &  Small (140M) & Medium (370M) & Large (770M) & XLarge (1.3B) \\
  \midrule
       AdamW and Muon Learning Rate
          & $2e^{-2}$ & $1.2e^{-2}$ & $6e^{-3}$ & $5e^{-3}$ \\
       Warmup Ratio 
          & \multicolumn{4}{c}{0.05} \\
       Constant Decay Ratio 
          & \multicolumn{4}{c}{0.2} \\
       Gradient Clipping
          & \multicolumn{4}{c}{1.0} \\
    \midrule
       Muon Momentum
          & \multicolumn{4}{c}{0.95} \\
       Muon Weight Decay
          & \multicolumn{4}{c}{0.3}\\
       Muon Orthogonalization Steps 
          & \multicolumn{4}{c}{5} \\
    \midrule
       AdamW Betas
          & \multicolumn{4}{c}{(0.8,0.95)} \\
       AdamW Weight Decay
          & \multicolumn{4}{c}{0.1} \\
       AdamW Epsilon
          & \multicolumn{4}{c}{$1e^{-10}$} \\
  \midrule
       Batch Size
          & \multicolumn{4}{c}{256} \\
       Context Length
          & \multicolumn{4}{c}{2048} \\
       Param to Token Ratio
          & \multicolumn{4}{c}{1:80} \\
       Precision
          & \multicolumn{4}{c}{\texttt{bf16-mixed}} \\
  \bottomrule
  \end{tabular}
  \caption{Pretraining Hyperparameters for \texttt{climbmix} trained Parcae models (including MLA and KDA variants).}
  \label{tab:model-hyperparams}
\end{table}

\paragraph{\sysname and Standard Finetuning.} For both \sysname and Standard Finetuning, we sweep over learning rates, $\mathrm{lr} \in \{ 5e^{-6}, 1e^{-5}, 2e^{-5}, 3e^{-5}, \\ 5e^{-5}, 8^{-5}, 1.25e^{-4}, 2e^{-4}, 3e^{-4}, 4.5e^{-4}, 6e^{-4} \}$ and $\gamma \in \{0.01, 0.03, 0.1, 0.3, 1, 3 \}$. We perform these sweeps with an optimizer budget of 500 steps. For Standard Finetuning, we find 770M models (across MHA, MLA, and KDA) to perform best at a learning rate of $2e^{-4}$, while the 1.3B models perform best at $8e^{-5}$ on a held out validation set. For \sysname we find all models perform best at a learning rate of $1.25e^{-4}$ and with a $\gamma$ of $0.01$ on a held out validation set. Since we sweep over learning rates with an optimizer budget of 500 steps, we rescale our learning rate to account for the increase in optimizer budget (1800 steps). Specifically, all final learning rates are $\mathrm{lr_{optim}} * \frac{500}{1800}$.

\paragraph{$\sysname^\dagger$ Finetuning.} For $\sysname^\dagger$, we perform a more limited hyperparameter sweep on a limited model set: the released Parcae 770M and 1.3B models. We sweep over $\mathrm{lr} \in \{3e^{-4}, 1e^{-3} \}$ and $\gamma \in \{0.06, 0.12, 0.25, 1 \}$ for the full 1000 step optimizer budget. For both models, a learning rate of $1e^{-3}$ and a $\gamma$ of $0.06$ performed best on a held out validation set. We use a rank 8 adapter with an $\alpha=16$.

\newpage

\end{document}